\setlist[itemize]{leftmargin=5mm,itemsep=0.5mm}
\setlist[enumerate]{leftmargin=*,itemsep=0.5mm}
\newcommand{\calX}{\mathcal{X}}
\newcommand{\calH}{\mathcal{H}}
\newcommand{\calF}{\mathcal{F}}
\newcommand{\calA}{\mathcal{A}}
\newcommand{\calN}{\mathcal{N}}
\newcommand{\calO}{\mathcal{O}}
\newcommand{\bbR}{\mathbb{R}}
\newcommand{\bbE}{\mathbb{E}}
\definecolor{aoe}{rgb}{0.0, 0.5, 0.0}
\definecolor{amber}{rgb}{1.0, 0.49, 0.0}
\DeclareMathOperator{\Tr}{Tr}
\newcommand{\riemannGrad}{\text{grad}}
\newenvironment{talign*}
 {\csname align*\endcsname}
 {\endalign}
\newenvironment{talign}
{\align}
{\endalign}
\newtheorem{assumption}{Assumption}
\newtheorem{remark}{Remark}
\newtheorem{lemma}{Lemma}
\newtheorem{theorem}[lemma]{Theorem}
\newtheorem{proposition}[lemma]{Proposition}
\newcommand\autowidehat[1]{%
\savestack{\tmpbox}{\stretchto{%
  \scaleto{%
    \scalerel*[\widthof{\ensuremath{#1}}]{\kern0pt\bigwedge\kern0pt}%
    {\rule[-\textheight/2]{1ex}{\textheight}}
  }{\textheight}%
}{0.5ex}}%
\stackon[1pt]{#1}{\tmpbox}%
}
\begin{document}

%

%
\runningauthor{Xing Liu, Harrison Zhu, Jean-Fran\c{c}ois Ton, George Wynne, Andrew Duncan}

\twocolumn[

\aistatstitle{Grassmann Stein Variational Gradient Descent}

\aistatsauthor{ 
    Xing Liu \\ 
    Imperial College London \\
    \And 
    Harrison Zhu \\
    Imperial College London \\
    \And 
    Jean-Fran\c{c}ois Ton \\ 
    University of Oxford \\
    \AND 
    George Wynne \\
    Imperial College London \\
    \And 
    Andrew Duncan \\
    Imperial College London}


\aistatsaddress{ } ]

\begin{abstract}
Stein variational gradient descent (SVGD) is a deterministic particle inference algorithm that provides an efficient alternative to Markov chain Monte Carlo. However, SVGD has been found to suffer from variance underestimation when the dimensionality of the target distribution is high. Recent developments have advocated projecting both the score function and the data onto real lines to sidestep this issue, although this can severely overestimate the epistemic (model) uncertainty. In this work, we propose \textit{Grassmann Stein variational gradient descent} (GSVGD) as an alternative approach, which permits projections onto arbitrary dimensional subspaces. Compared with other variants of SVGD that rely on dimensionality reduction, GSVGD updates the projectors simultaneously for the score function and the data, and the optimal projectors are determined through a coupled Grassmann-valued diffusion process which explores favourable subspaces. Both our theoretical and experimental results suggest that GSVGD enjoys efficient state-space exploration in high-dimensional problems that have an intrinsic low-dimensional structure.
\end{abstract}

\section{INTRODUCTION}
\label{sec:introduction}
Variational inference (VI) \citep{blei2017variational} is an optimisation-centric framework for approximating complex distributions that are intractable (only known up to a scale factor): given any target distribution $p(x)$, VI searches over a user-defined class of distributions $\mathcal{Q}$ for an optimal $q(x) \in \mathcal{Q}$ that is closest (in terms of a discrepancy or divergence) to $p(x)$. Extensively applied in the field of Bayesian inference, VI has the attraction of being scalable to big datasets, although it leads to biased estimation unless $\mathcal{Q}$ is broad enough to include the target distribution. This is in contrast to Markov chain Monte Carlo (MCMC) algorithms \citep{gilks1995markov}, which allows asymptotically exact sampling from the true target distribution, but does not scale well to big datasets and high dimensionality due to long mixing times \citep{levin2017markov}.


The choice of $\mathcal{Q}$ in VI is crucial to guarantee a good approximation to $p(x)$ while retaining computational tractability. The classical mean-field approximation, which assumes that the distributions in $\mathcal{Q}$ have independent marginals, can be overly simplistic in many cases. To address this, a growing line of work in VI with Normalising Flows (NFs) \citep{rezende2015variational} seeks to construct an invertible map $T$ so that the pushforward distribution of $T(x)$, with $x\sim q$ and $q \in \mathcal{Q}$, will form a flexible approximation to $p$. 

Stein variational gradient descent (SVGD) \citep{Liu2016} was introduced as a particle-based variational inference method in which the map $T$ seeks to move a set of {particles} along a vector field which is chosen within a reproducing kernel Hilbert space (RKHS) to optimally transport towards the target $p(x)$.  Building on the kernel Stein discrepancy \citep{Liu2016}, the SVGD transport both drives the particles to the high-probability regions of $p(x)$ and enforces repulsion between the particles to prevent mode-collapse. It hence has the advantage of being ``particle-efficient'' in that a small number of particles can achieve good approximation of $p(x)$. Although this has made SVGD a popular tool in a range of applications including meta-learning \citep{yoon2018bayesian} and learning diversified mixture models \citep{wang2019nonlinear}, it is found that the marginal variance of the resulting particles scales inversely with the dimension, resulting in under-estimation of the variance. This was studied analytically in \cite{ba2022understanding}, in which the authors attributes this issue to \emph{(i)} an uneven scale of variance for the two terms in the SVGD update, and \emph{(ii)} a deterministic bias due to the dependence of the particle positions at each step.

Recently, \cite{gong2021sliced} proposed a  \emph{sliced} version of kernel Stein discrepancy (KSD) analogous to other sliced discrepancies \citep{kolouri2019generalized}, and an associated \emph{sliced SVGD} (S-SVGD). This variant of SVGD decomposes the kernel-based dynamics along a sequence of 1-dimensional projections, called \emph{slices}.  Empirical results \cite[Appendix J.5, Table 5]{gong2021sliced} showed that the S-SVGD dynamic can mitigate the variance under-estimation issue of SVGD in high dimensions. However, S-SVGD is constrained to 1-dimensional projection, and only seeks the optimal slices for the data but keeps the slices for the score function deterministic. As we demonstrate in this paper, these can lead to inflated variance of the S-SVGD estimation. 

In this work, we introduce a unified approach where a non-uniform probability distribution over the projectors for \emph{both the score function and the data} are adaptively updated to emphasise directions in which there is largest discrepancy. This is achieved by introducing a stochastic diffusion process taking values in the Grassmann manifold \citep{bendokat2020grassmann} which evolves along with SVGD particles. By tuning the diffusion process parameters we can adjust the trade-off between exploration and exploitation of suitable projections. In addition, using projections onto higher dimensional spaces allows us to take into account the correlations/interactions between components and thus producing more accurate uncertainty estimates.


\textbf{Contributions:} 
Motivated by S-SVGD, we propose a novel algorithm, Grassmann SVGD (GSVGD), which employs $m$-dimensional projectors (compared with the $1$-dimensional ones in S-SVGD) to evolve the particles towards the target distribution and to mitigate variance under-estimation. We show numerically that our method is competitive to SVGD and S-SVGD on high-dimensional synthetic and benchmark problems while more accurately estimating the epistemic uncertainty.

\begin{figure}
    \centering
    \includegraphics[width=1\linewidth]{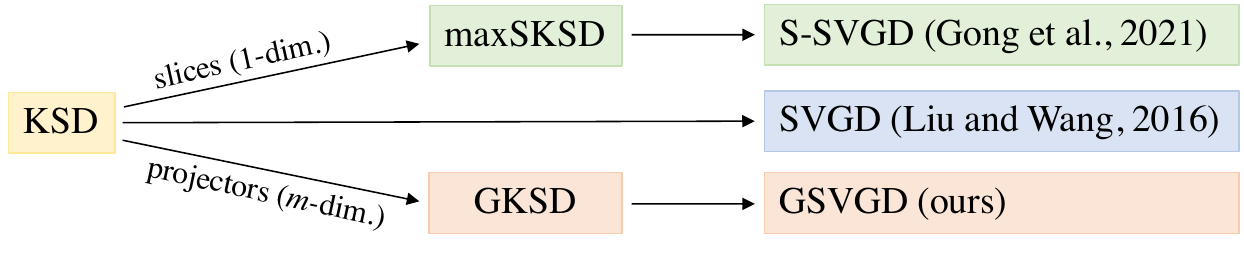}
    \vspace{-0.3cm}
    \caption{A summary of different SVGD algorithms using no projections (SVGD), 1-dimensional projections (S-SVGD), and $m$-dimensional projections (GSVGD) with an arbitrary $1 \leq m \leq d$, where $d$ is the dimensionality of the problem. }
    \label{fig: summary}
\end{figure}

\section{BACKGROUND}
\label{sec:background}
\paragraph{Stein Variational Gradient Descent:} Let $\calX = \bbR^{d}$ and $P$ be a probability measure over $\calX$ with smooth positive density $p$ which we can evaluate up to a normalisation constant. We are interested in approximating $P$ by transporting a known measure $Q$, defined over $\calX$ and with smooth density $q$, to $P$ via a sequence of maps that minimise a given loss. 

Choosing the loss to be the Kullback-Leibler (KL) divergence defined as $\text{KL}(Q,P) = \int_{\calX}q(x)\log(q(x)/p(x))dx$ allows us to minimise the discrepancy $\text{KL}(T_{\#}Q,P)$  over maps $T\colon\calX\rightarrow\calX$ where $T_{\#}Q$ is the pushforward of $Q$ with respect to $T$. In \citet{Liu2016SVGD}, the authors choose a specific parametrisation $T(x) = x+\varepsilon\phi(x)$ where $\phi$ lies within the unit ball $\mathcal{B}_k^d:=\{\phi\in\mathcal{H}_k^d: ||\phi||_{\mathcal{H}_k^d}\leq 1\}$ of the Cartesian product $\mathcal{H}_k^d:=\bigtimes_{i=1}^d \mathcal{H}_k$ of the RKHS $\mathcal{H}_k$ associated with kernel $k:\mathcal{X}\times\mathcal{X}\rightarrow\mathbb{R}$. The crucial observation of \citet{Liu2016SVGD} is that the maximal rate of decay of this discrepancy with respect to $\epsilon$ is the kernel Stein discrepancy (KSD) \citep{Chwialkowski16,Liu2016} given by
\begin{talign}
    \text{KSD}(Q,P) 
    &= \sup_{\phi\in\mathcal{B}_k^d} \mathbb{E}_Q[\calA_{p}\phi(x)],\label{eq:KSD_def}
\end{talign}
where $x\sim Q$,  $s_{p}(x) = \nabla_{x}\log p(x)$ is the \emph{score function} and $\mathcal{A}_{p} \phi(x) = s_{p}(x)^\intercal\phi(x) + \nabla \cdot \phi(x)$ is the \textit{Stein operator} associated to $P$. The key advantage of KSD is that it quantifies the difference between two measures $P$ and $Q$ where only the score function of $P$ is available, which is the typical situation in Bayesian inference for complex models.  The supremum in \eqref{eq:KSD_def} is attained by the function $\phi^{*}(\cdot) 
\coloneqq \mathbb{E}_{Q}[\mathcal{A}_p k(\cdot,x)] =\mathbb{E}_Q[k(\cdot, x)s_{p}(x) +   \nabla_{x} k(\cdot, x)]$, which leads to the remarkable result \citep[Theorem 3.3]{Liu2016SVGD} $\nabla_\phi \text{KL}(T_{\#}Q,P)|_{\phi=0} = -\phi^{*}(x)$ where $\nabla_{\phi}$ is the functional derivative and $T(x)$ is the map parameterised as above. This suggests that $\phi^{*}$ is, according to KL divergence, the best choice of function to use in $T$ to map $Q$ to $P$. 

The expression for the optimal $\phi^*$ forms the basis of the SVGD algorithm.  Particles $X^t = (x_1^t,\ldots, x_N^t)$ at step $t$, are evolved using the following update rule
\begin{talign}
 \label{eq:svgd_vanilla}
    x_i^{t+1} 
    &= x_i^t + \epsilon  \widehat{\phi}^\ast(x_i^t),
\end{talign}
for $i=1,\ldots, N$ and $t=0, 1,2,\ldots$, where \begin{talign}
    \label{eq: svgd_vanilla update}
    \widehat{\phi}^*(\cdot) = \frac{1}{N}\sum_{j=1}^{N}\big[k(\cdot,  x_j^t)\nabla_{x^t_j}\log p(x_j^t) + \nabla_{x_j^t} k(\cdot, x_j^t)\big]
\end{talign} is an estimator of $\phi^*(\cdot)$ obtained by replacing $Q$  with  $\frac{1}{N}\sum_{i=1}^N \delta_{x_i^t}$. Intuitively, \eqref{eq:svgd_vanilla} is an Euler discretisation with step-size $\epsilon$ of the following system of ODEs
\begin{talign*}
    \frac{d x_i}{dt}(t) 
    &= \frac{1}{N}\sum_{j=1}^{N}\big[k(x_i(t),  x_j(t))\nabla_{x_j(t)}\log p(x_j(t)) \\ 
    &\quad + \nabla_{x_j(t)} k(x_i(t), x_j(t))\big], \quad i=1,\ldots,N,
\end{talign*}
for particle positions $x_1(t),\ldots, x_N(t)$.

\paragraph{Tackling the Curse of Dimensionality:}
The SVGD algorithm has shown to be asymptotically exact, in the sense that in the limit of infinite particles and vanishing step-size the particles will converge to the target density \citep{lu2019scaling}.  However, in  the finite particle regime,  SVGD may fail to adequately explore the state-space, exhibiting mode collapse and variance under-estimation, particularly in high dimensions \citep{Liu2016,zhuo2018message,gong2021sliced}.

The deterioration in performance in high-dimensions can be understood from the update rule defined in \eqref{eq: svgd_vanilla update}. The RHS consists of two terms: a kernel-averaged score function $\frac{1}{N}\sum_j k(\cdot, x_j^t)\nabla\log p(x_j^t)$ which attracts particles towards modes of the target density, and a repulsion term $\frac{1}{N}\sum_j \nabla k(\cdot, x_j^t)$ which encourages particle diversity by pushing nearby particles away from each other. As has been observed in \citet[Section 3.2]{Liu2016SVGD} and studied analytically in \cite[Section 3]{zhuo2018message}, the influence of the repulsion term drops dramatically with increasing dimension, effectively reducing SVGD to a gradient ascent method for $\log p$. As a result, the estimation of SVGD can suffer from a significant under-estimated variance for high-dimensional targets. 

Various developments have been proposed to mitigate this issue; we now discuss the most relevant to our work, the sliced approach of \citet{gong2021sliced}, and defer a review of the others to Section \ref{sec:related_work}. \citet{gong2021sliced} addressed this problem by projecting both the particles and the score function along one-dimensional directions, known as \emph{slices}, leading to the \emph{max sliced kernel Stein discrepancy} (maxSKSD). This approach hinges on the fact that measuring KSD along finitely many slices is sufficient to capture all geometric information of two distributions, as long as optimal slices are used. Given a user-defined orthonormal basis $O$ in $\bbR^d$, the \emph{maxSKSD} takes the form
\begin{align}
    &\text{maxSKSD}(Q,P) 
     =
     \sum_{r \in O} \sup_{g_{r}\in\mathbb{S}^{d-1}}\sup_{\phi \in \mathcal{B}_{k}} \bbE_Q[ s_p^r(x) \phi(x^{\intercal}g_{r}) \nonumber \\ 
     &\quad + r^\intercal g_{r} \nabla_{x^{\intercal}g_{r}} \phi(x^{\intercal}g_{r})],\label{eq:maxSKSD}\\
    &= \sum_{r \in O}\sup_{g_{r}\in\mathbb{S}^{d-1}}\sup_{\phi \in \mathcal{B}_{k_{r,g_{r}}}}\bbE_{Q}[s_{p}(x)^\intercal\phi(x) + \nabla\cdot\phi(x)]\label{eq:maxSKSD_mat}
\end{align}
where $s^r_p(x) \coloneqq s_p(x)^\intercal r$ is the score projected along a slicing direction $r$ and $\mathcal{B}_{k}$ is the unit ball of from RKHS $\calH_{k}$ for some kernel $k\colon\bbR\times\bbR\rightarrow\bbR$. The subscript $g_{r}$ is used to emphasise that for each $r$ a new optimal $g_{r}$ must be found. In \eqref{eq:maxSKSD_mat}, $\mathcal{B}_{k_{r,g_{r}}}$ is the unit ball of the RKHS corresponding to the matrix valued kernel $k_{r,g_{r}}(x,y) = rr^{\intercal}k(g_{r}^{\intercal}x,g_{r}^{\intercal}y)$. This can be seen from standard results regarding matrix valued kernels \citep[Chapter 6]{Paulsen2016} and this will be relevant to our later developments.

The idea of the maxSKSD approach is that the $r\in O$ will slice the score function of the target, and for each of those slices the input is sliced according to some other directions $g_{r}$. It is known that, under some standard regularity conditions, maxSKSD discriminates measures \citep[Corollary 3.1]{gong2021sliced}. Replacing KSD with maxSKSD in SVGD results in \emph{sliced-SVGD} (S-SVGD; \citet{gong2021sliced}). 

An issue with this procedure is that the basis $O$ is a user-defined choice and may result in inefficient performance. Additionally, restricting $g_{r}$ and $r$ to one dimension might not capture the underlying structure as effectively as higher dimensional projections. Our approach, described in Section \ref{sec:GSVGD}, will use projections onto subspaces that have dimension potentially greater than one. The framework of these subspace projections is described next. This framework also allows us to explore projections in the underlying geometry more efficiently by using Riemannian optimisation.

\paragraph{Grassmann Manifold}
Let $m \in \{ 1, 2, \ldots, d\}$. A crucial ingredient of our proposed method is the \emph{Grassmann manifold} of $m$-dimensional subspaces in $\bbR^d$ \citep{bendokat2020grassmann}, defined as
\begin{talign*}
    \textrm{Gr}(d, m) = \{ E \subseteq \bbR^d: \textrm{dim}(E) = m \}.
\end{talign*}
The intuition behind our proposed method is to define a KSD that seeks the worst possible discrepancy over all distinct subspaces of dimension $m$, where the worst-case subspace can be found with gradient-type optimisation on the Grassmann manifold. We briefly review the key ingredients of optimisation on the Grassmann manifold; a more detailed discussion is deferred to Appendix \ref{appendix: grassmann manifold}.

To make sense of optimisation on the Grassmann manifold, we first need to represent each subspace $E \in \textrm{Gr}(d, m)$ in memory. One way of doing so is by a projection operator that maps every $x \in \bbR^d$ to an element in the subspace. To this end, we define a \emph{projector} of rank $m$ to be a $d \times m$ matrix $A$ with orthonormal columns, i.e.\ $A^\intercal A = I_m$, where $I_{m}$ is the $m\times m$ identity matrix. It then follows that $A A^\intercal$ is a projection matrix since $(A A^\intercal)^2 = A A^\intercal$ and $(A A^\intercal)^\intercal = A A^\intercal$. Denote by $[A] = \{ A y: y \in \bbR^m \} \subset \bbR^d$ the image of $A$.

We represent each subspace $E \in \textrm{Gr}(d, m)$ by \emph{any} projector $A$ for which $[A] = E$. This is always possible since, for each projector $A$ the subspace $[A]$ is trivially an element of $\textrm{Gr}(d, m)$, and, conversely, for any $E \in \textrm{Gr}(d, m)$, we can construct a corresponding projector $A$ by column-wise appending the elements of any orthonormal basis of $E$. Such $A$ is unique up to an orthogonal transformation: for any projectors $A, B$, the subspaces $[A] = [B]$ if and only if $A = BC$ for some orthogonal matrix $C$ in $\bbR^{m \times m}$. As we will show in Section~ \ref{sec:GSD}, the proposed KSD does not depend on which projector is chosen so long as it corresponds to the same subspace $E$.

Given a representative $A$ of $[A] \in \textrm{Gr}(d, m)$, the \emph{tangent space} at $[A]$ is defined as $\mathcal{T}_{[A]}\text{Gr}(d,m)=\lbrace \Delta \in \mathbb{R}^{d \times m} : A^\intercal \Delta = 0 \rbrace$. We endow a Riemannian metric on $\mathcal{T}_{[A]}\text{Gr}(d,m)$ by restricting the standard matrix inner product $\langle \Delta, \tilde{\Delta} \rangle_0 = \Tr(\Delta^\intercal \tilde{\Delta})$ to elements $\Delta, \tilde{\Delta}$ in $\mathcal{T}_{[A]} \textrm{Gr}(d, m)$. It is then clear that $\Pi_A = I_d - A A^\intercal$ is a projection operator of $\bbR^{d \times m}$ onto $\mathcal{T}_{[A]} \textrm{Gr}(d, m)$: $\Pi_A G = (I_d - A A^\intercal) G \in \mathcal{T}_{[A]} \textrm{Gr}(d, m)$, for any $G \in \bbR^{d \times m}$ \cite[Proposition 3.53]{boumal2020introduction}.

\begin{figure}
    \centering
    \includegraphics[width=0.8\linewidth]{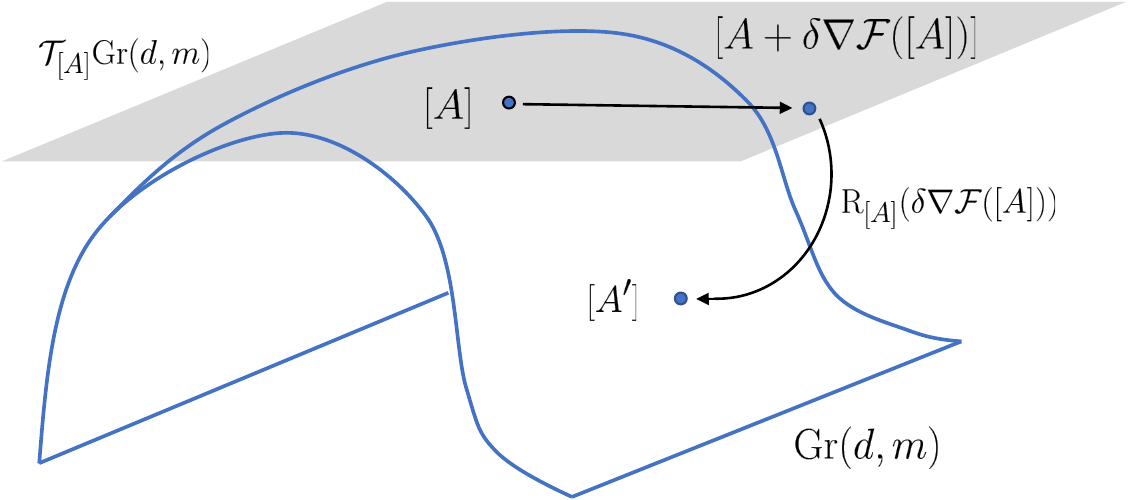}
    \caption{An illustration of a gradient ascent step on the Grassmann manifold.}
    \label{fig: riemannian gradient descent}
\end{figure}

Let $\calF$ be a function defined on $\text{Gr}(d,m)$ that is smooth in a proper sense \cite[Section 3]{boumal2020introduction}. Given a base point $[A] \in \text{Gr}(d,m)$, maximising $\calF$ requires \emph{(i)} finding a vector field of steepest ascent of $\calF$ at $[A]$, and \emph{(ii)} moving from $[A]$ along that vector field without leaving the manifold. 

On the Grassmann manifold, the direction of steepest ascent is no longer represented by the standard (Euclidean) gradient $\nabla \calF([A])$. Instead, it is given by the \emph{Riemannian gradient} of $\calF$ at $[A]$, given by
\begin{talign*}
    \riemannGrad \calF = \Pi_A \nabla \calF([A]), 
\end{talign*}
where $[\nabla \calF([A])]_{ij} = \partial \calF([A]) / \partial A_{ij} $. That is, $\riemannGrad \calF$ is the projection of the Euclidean gradient $\nabla \calF([A])$ onto the tangent space at $[A]$. For \emph{(ii)}, one means of moving along a manifold in a given direction is by the \emph{exponential map} \cite[Section 10.2]{boumal2020introduction}. In our work, we use the \emph{polar retraction} \citep[Definition 3.41]{boumal2020introduction}, which is an accurate (second order) approximation of the exponential map that is amendable to computation, and is defined as
\begin{talign}
    \label{eq: polar retr}
    \textrm{R}_{[A]}(\Delta ) = [UV^\intercal] ,
\end{talign}
where $[A] \in \text{Gr}(d,m)$, and $A + \Delta = USV^\intercal$ is a thin Singular Value Decomposition (see e.g.\ \citet[Section 9.6]{boumal2020introduction}). 

To summarise, with a step-size $\delta > 0$, one step to maximise $\calF$ starting from a base point $[A]$ is given by
\begin{talign*}
    [A'] = \textrm{R}_{[A]}(\delta \riemannGrad \calF).
\end{talign*}
See~Fig.\ref{fig: riemannian gradient descent} for an illustration. This is a generalisation of the standard gradient ascent to the Grassmann manifold \cite[Section 4.3]{boumal2020introduction}.

\section{GRASSMANN KERNEL STEIN DISCREPANCY}
\label{sec:GSD}
The starting point of our proposed approach is to introduce a novel Stein discrepancy which incorporates dimension reduction features through the use of a projector. To this end, for a projector $A$ of rank $m$ we introduce a matrix-valued kernel $k_A$ arising from an embedding on $\mathbb{R}^d$ into a $m$-dimensional subspace where $m\leq d$
\begin{talign*}
    k_A(x,y) = A A^\intercal k(A^\intercal x, A^\intercal y),\quad  x,y\in \mathbb{R}^d,
\end{talign*}
where $k:\mathbb{R}^m\times \mathbb{R}^m\rightarrow \mathbb{R}$ is a scalar-valued positive definite radial kernel on $\mathbb{R}^m$. That is, 
$k(u,v)=\Psi(\lVert u-v\rVert_{2}), u,v\in \mathbb{R}^m$, where $\Psi\colon\bbR\rightarrow\bbR$ and $\lVert\cdot\rVert_{2}$ is the Euclidean $2$-norm. Before proceeding, since we shall be performing optimisation over $\text{Gr}(d,m)$, we must be sure that for any two projectors $A,B$ that are equivalent in Gr$(d,m)$ the kernels $k_{A},k_{B}$ are equal. The next lemma assures us of this (see Appendix~\ref{appendix: proof of lemma grassmann_kernel} for the proof).

\begin{lemma}
\label{lemma: grassmann_kernel}
Let $A,B$ be projectors of rank $m$ with $[A] = [B]$, then $k_{A} = k_{B}$.
\end{lemma}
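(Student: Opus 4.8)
The plan is to leverage the characterisation of projector equivalence stated just above the lemma: since $[A]=[B]$, there exists an orthogonal matrix $C\in\bbR^{m\times m}$ with $A=BC$. I would then argue that the kernel $k_A(x,y)=AA^\intercal k(A^\intercal x,A^\intercal y)$ depends on $A$ only through two quantities, the outer product $AA^\intercal$ and the squared distance $\lVert A^\intercal x - A^\intercal y\rVert_2$, and that both are invariant under replacing $A$ by $BC$. The whole argument thus reduces to two short computations plus the radial structure of $k$.

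First I would establish that $AA^\intercal = BB^\intercal$. Substituting $A=BC$ gives $AA^\intercal = BC(BC)^\intercal = BCC^\intercal B^\intercal$, and since $C$ is orthogonal, $CC^\intercal = I_m$, whence $AA^\intercal = BB^\intercal$. This handles the matrix prefactor of the kernel.

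Next I would show the scalar factor agrees, i.e.\ $k(A^\intercal x,A^\intercal y)=k(B^\intercal x,B^\intercal y)$. Writing $A^\intercal = C^\intercal B^\intercal$, I compute
\begin{talign*}
    \lVert A^\intercal x - A^\intercal y\rVert_2 = \lVert C^\intercal B^\intercal(x-y)\rVert_2 = \lVert B^\intercal(x-y)\rVert_2 = \lVert B^\intercal x - B^\intercal y\rVert_2,
\end{talign*}
where the middle equality uses that $C^\intercal$ is orthogonal and hence an isometry of the Euclidean $2$-norm. Because $k(u,v)=\Psi(\lVert u-v\rVert_2)$ is radial, equal arguments in $\Psi$ force $k(A^\intercal x,A^\intercal y)=k(B^\intercal x,B^\intercal y)$. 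Combining the two displays, $k_A(x,y)=AA^\intercal k(A^\intercal x,A^\intercal y)=BB^\intercal k(B^\intercal x,B^\intercal y)=k_B(x,y)$ for all $x,y\in\bbR^d$, which is the claim.

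I do not anticipate a genuine obstacle here; the content is essentially a verification. The one point that genuinely matters — and which I would be careful to flag — is that the argument crucially uses the \emph{radial} assumption on $k$: it is precisely the isometry invariance of $\Psi(\lVert\cdot\rVert_2)$ that neutralises the orthogonal factor $C^\intercal$ acting inside the kernel. A generic (non-radial) scalar kernel on $\bbR^m$ would not yield $k(A^\intercal x,A^\intercal y)=k(B^\intercal x,B^\intercal y)$, so this hypothesis is not cosmetic but load-bearing for the well-definedness of the discrepancy on $\text{Gr}(d,m)$.
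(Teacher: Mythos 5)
Your proposal is correct and follows essentially the same route as the paper: both invoke the characterisation $[A]=[B]\iff A=BC$ for an orthogonal $C$, cancel $CC^\intercal=I_m$ in the prefactor, and use the isometry of $C^\intercal$ together with the radial form $k(u,v)=\Psi(\lVert u-v\rVert_2)$ to equate the scalar factors. Your explicit remark that the radial hypothesis is load-bearing is a nice touch but does not change the argument.
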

Safe in the knowledge that $k_{A}$ is well defined over $\text{Gr}(d,m)$, we may define an associated KSD, which reveals how this choice of $k_{A}$ induces a projection of the data and score function. 
\begin{align}
    \mbox{KSD}_{A}(Q,P) 
    &= \sup_{\phi\in\mathcal{B}_{k_{A}}}\bbE_{Q}[s_{p}(x)\cdot \phi(x) + \nabla\cdot\phi(x)]
    \label{eq:GKSD_vec}\\
    = \sup_{\phi\in\mathcal{B}_{k}^{m}}\bbE_{Q}&\big[(A^\intercal s_{p}(x))\cdot\phi(A^\intercal x)
     + \nabla\cdot \phi(A^\intercal x)\big] .
    \label{eq:KSD_A}
\end{align}
where $\mathcal{B}_{k_{A}}$ is the unit ball of the RKHS corresponding to $k_{A}$ and $\mathcal{B}_{k}^{m}$ is the unit ball of the $m$-times Cartesian product of $\calH_{k}$, the RKHS corresponding to $k$. Contrasting this to \eqref{eq:maxSKSD}, setting $m=1$ and $r,g_{r} = A$ results in \eqref{eq:KSD_A}. The $r^{\intercal}g_{r}$ term in \eqref{eq:maxSKSD} becomes $I_{m}$ in \eqref{eq:KSD_A} since $A$ has orthonormal columns.

Analogous to the approach for KSD (see \citet{Liu2016}), we exploit the kernel trick to express $\text{KSD}_A$:
\begin{talign}
    &\text{KSD}_{A}(Q,P) \nonumber \\
    &=\mathbb{E}_{x, x'\sim Q}\left[ (A^\intercal s_{p}(x)) \cdot A^\intercal s_{p}(x')k(A^\intercal x,A^\intercal x')\right] \nonumber \\
    &\quad +2\mathbb{E}_{x, x' \sim Q}[ (A^\intercal s_{p}(x)) \cdot \nabla_{x_2}k(A^\intercal x,A^\intercal x')] \nonumber\\
    &\quad +\mathbb{E}_{x, x' \sim Q}[\Tr(\nabla_{x_1,x_2}k(A^\intercal x,A^\intercal x'))],\label{eq:KSD_double_integral}
\end{talign}
where $x, x'$ are i.i.d.\ random variables drawn from $Q$, $\nabla_{x_i} k(A^\intercal x, A^\intercal x')$ denotes the gradient of $k$ with respect to the $i$-th argument, $\nabla_{x_1,x_2}k$ is the matrix with $ij$-th entry $\partial^{2}k/\partial x_{i}\partial x_{j}$, and $\Tr(\cdot)$ denotes the trace.

It is clear that $\text{KSD}_{A}$ does not characterise probability measures for any fixed $A$.  For example if $P = \mathcal{N}(0,C)$ and $Q = \mathcal{N}(0, C')$ are distinct probability measures such that $A^\intercal C= A^\intercal C'$, then $\mbox{KSD}_{A}(Q,P) = 0$.  This motivates us to consider the worst-case discrepancy taken over all $m$-dimensional subspaces. We define the \emph{Grassmann Kernel Stein Discrepancy} (GKSD) as
\begin{talign}
\label{eq:GKSD}
    \text{GKSD}(Q,P) = \sup_{[A]\in \text{Gr}(d,m)} \text{KSD}_{A}(Q,P).
\end{talign}
The following theorem guarantees that GKSD is able to discriminate distinct distributions. The assumptions on $P,Q$ are standard \citep{gong2021sliced}.

\begin{theorem}\label{thm:KSD_separates}
Let $P,Q$ be Borel measures with continuously differentiable densities $p,q$ both supported on $\bbR^{d}$ with $\lim_{\|x\|_2\rightarrow \infty}q(x) = 0$ and suppose the kernel $k(u,v) = \Psi(\|u-v\|_2)$ is characteristic and bounded with continuous second order partial derivatives. Then the \emph{GKSD} \eqref{eq:GKSD} equals $0$ if and only if $p=q$.
\end{theorem}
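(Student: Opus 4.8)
The plan is to prove both directions of the equivalence. The easy direction is that $p=q$ implies $\text{GKSD}(Q,P)=0$: if $p=q$ then $s_p = s_q$, and for any fixed projector $A$ the Stein identity $\bbE_Q[s_q(x)\cdot\phi(x) + \nabla\cdot\phi(x)] = 0$ holds for every $\phi$ in the relevant RKHS (this is integration by parts using $\lim_{\|x\|_2\to\infty}q(x)=0$ to kill boundary terms), so $\text{KSD}_A(Q,P)=0$ for all $[A]$, and taking the supremum gives $\text{GKSD}(Q,P)=0$.

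The substantive direction is the converse: $\text{GKSD}(Q,P)=0 \Rightarrow p=q$. First I would observe that $\text{GKSD}(Q,P)=0$ forces $\text{KSD}_A(Q,P)=0$ for \emph{every} projector $A$ of rank $m$ (since the quantity is a supremum of nonnegative terms). My strategy is then to reduce to the characteristic property of the scalar kernel $k$ on $\bbR^m$. For a fixed $A$, the expression \eqref{eq:KSD_A} is exactly the (standard, full-dimensional) KSD on $\bbR^m$ between the pushforward measures $A^\intercal_\# Q$ and the measure with projected score $A^\intercal s_p$, up to identifying the score function correctly. The key analytic point is a change-of-variables/projection argument: I would split $\bbR^d = [A] \oplus [A]^\perp$, write $x = Au + A_\perp w$, and show that $\text{KSD}_A(Q,P)=0$ together with the characteristic property of $k$ implies an equality of certain $m$-dimensional marginal/conditional quantities built from $p$ and $q$. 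Concretely, because $k$ is characteristic on $\bbR^m$ and its associated RKHS is rich enough, $\text{KSD}_A=0$ should yield that the projected score $\bbE_Q[A^\intercal s_p(x)\mid A^\intercal x]$ matches the score of the pushforward $A^\intercal_\# Q$ — i.e.\ a Stein-type identity holds for the $m$-dimensional marginal along $[A]$.

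To finish I would collect this information across all subspaces. The cleanest route, following \citet{gong2021sliced}, is to reduce to the one-dimensional case: for $m\geq 1$, varying $A$ over all rank-$m$ projectors in particular lets me access every one-dimensional direction as a column, so the information obtained from all $\text{KSD}_A=0$ subsumes the slicing argument behind \citep[Corollary 3.1]{gong2021sliced}. The plan is to show that the collection of identities $\{\text{KSD}_A(Q,P)=0 : [A]\in\text{Gr}(d,m)\}$ implies $\nabla\log p = \nabla\log q$ in an appropriate projected sense along every direction, hence $\nabla\log(p/q)=0$ everywhere; since both densities are normalised probability densities supported on all of $\bbR^d$, this gives $p=q$.

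The main obstacle I anticipate is the projection/change-of-variables step: correctly relating the $d$-dimensional score $s_p$ appearing in \eqref{eq:KSD_A} to the genuine score of the $m$-dimensional pushforward $A^\intercal_\# P$. These are not equal in general — the pushforward score involves a conditional expectation over the orthogonal complement $[A]^\perp$ — so the argument must either work directly with the projected (rather than pushforward) score, or carefully account for the marginalisation. A secondary technical point is verifying that the characteristic/boundedness/second-derivative hypotheses on $k$ transfer to the matrix-valued kernel $k_A$ so that the RKHS in \eqref{eq:GKSD_vec} is rich enough to detect any nonzero discrepancy; this is where the decay condition $\lim_{\|x\|_2\to\infty}q(x)=0$ and continuity of the second partials are needed to justify the integration by parts and rule out boundary contributions.
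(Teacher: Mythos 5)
Your easy direction is fine and matches the paper's. For the converse, however, your route diverges from the paper's and leaves its hardest step unresolved --- and the obstacle you yourself flag is precisely where the argument breaks. If you pass to the pushforward $A^\intercal_{\#}Q$ on $\bbR^m$ and integrate by parts there, what $\text{KSD}_A(Q,P)=0$ actually yields (after invoking that $k$ is integrally strictly positive definite on $\bbR^m$) is that the \emph{conditional expectation} $\bbE_Q[A^\intercal(s_p(x)-s_q(x)) \mid A^\intercal x]$ vanishes $Q$-a.e., because the score of the pushforward equals the conditional expectation of the projected score. That is strictly weaker than $A^\intercal(s_p-s_q)=0$ pointwise, and your concluding sentence (``implies $\nabla\log p=\nabla\log q$ in an appropriate projected sense along every direction, hence $\nabla\log(p/q)=0$ everywhere'') silently upgrades the former to the latter. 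Closing that gap would require an additional injectivity argument as $[A]$ ranges over all of $\text{Gr}(d,m)$ --- a Radon-transform-type inversion, which is essentially what \citet{gong2021sliced} carry out for $m=1$; you gesture at this but do not supply it, and it is the entire content of the hard direction.

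The paper avoids this issue by never leaving $\bbR^d$. It applies Stein's identity twice with respect to the full $d$-dimensional density $q$ (Lemmas \ref{lem:Steins_identity} and \ref{lem:KSD_quadratic}) to rewrite $\text{KSD}_A(Q,P)$ as the quadratic form $\bbE_{x,x'\sim Q}[(A^\intercal\delta_{p,q}(x))^\intercal A^\intercal\delta_{p,q}(x')\,k(A^\intercal x,A^\intercal x')]$ with $\delta_{p,q}=s_p-s_q$ the difference of the genuine $d$-dimensional scores, so no marginal or conditional scores ever appear. It then uses that $\Psi$ is monotonically decreasing for a radial characteristic kernel, giving $k(A^\intercal x,A^\intercal x')\geq k_d(x,x'):=\Psi(\|x-x'\|_2)$, and that $k_d$ is integrally strictly positive definite on $\bbR^d$; this forces $A^\intercal\delta_{p,q}=0$ ($Q$-a.e.) for each fixed projector. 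Cycling $A$ through blocks of canonical basis vectors then gives $s_p=s_q$ coordinate by coordinate, hence $p=q$. Note that this argument needs only finitely many projectors, whereas your plan requires all of them. To repair your proposal you must either (i) prove the Radon-type injectivity statement for rank-$m$ projections of the conditional score difference, or (ii) adopt the double-Stein-identity reformulation, which converts the problem into one about the full-dimensional score difference from the outset.
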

\begin{remark}
A radial kernel $k(x,y) = \Psi(\|x-y\|_2)$ is characteristic if and only if it is integrally strictly positive definite \citep{sriperumbudur2011universality}. In particular, the assumptions of Theorem \ref{thm:KSD_separates} hold for Gaussian and Inverse Multiquadric kernels.
\end{remark}

Intuitively, projecting on higher dimensions for $s_p(x)$ will allow us to better account for the correlation between dimensions. With optimal projectors, correlations can be captured regardless of the projection dimension, since both GKSD and maxSKSD can discriminate distinct distributions. In practice, however, optimal projectors are not available, and hence projecting onto 1 dimension can lead to sub-optimal approximation due to weak correlation signals.

\section{GRASSMANN STEIN VARIATIONAL GRADIENT DESCENT}
\label{sec:GSVGD}
Equipped with this new form of kernel discrepancy we can perform a variant of SVGD which we call \emph{Grassmann Stein Variational Gradient Descent} (GSVGD). More specifically, we seek to find 
\begin{talign}
    \label{eq:sliced_objective}
    \phi_{A}^* \in \arg\sup_{\phi \in \mathcal{B}_{k_A}} -\frac{d}{d\epsilon}\mbox{KL}(T_{\sharp}Q, P)\Big|_{\epsilon=0},
\end{talign}
where $T_\sharp = I + \epsilon \phi$, and $I$ is the identity map. The supremum on the RHS of \eqref{eq:sliced_objective} is attained by 
\begin{talign*}
    \phi_{A}^{*}(\cdot) = \bbE_{Q}[A A^\intercal s_{p}(x)k(A^\intercal x,A^\intercal \cdot) + A\nabla_{x_1}k(A^\intercal x,A^\intercal \cdot)],
\end{talign*}
for which it is clear that
\begin{talign*}
-\frac{d}{d\epsilon}\mbox{KL}(T^*_{\sharp}Q, P)\Big|_{\epsilon=0} = \mbox{KSD}_{A}(P,Q), \quad T^*_{\sharp} = I + \epsilon \phi^*_A.
\end{talign*}
To transport the particles efficiently, at every step, we seek to identify the subspace $[A] \in \text{Gr}(d,m)$ with the largest discrepancy between $P$ and $Q$, so that we select
\begin{talign}
    \label{eq:sliced_objective_A}
    [A^*] \in \arg\sup_{[A] \in \text{Gr}(d,m)} \mbox{KSD}_A(P,Q),
\end{talign}
for which $\mbox{KSD}_{A^*}=\mbox{GKSD}(P,Q)$. In practice, the objective (\ref{eq:sliced_objective_A}) can be (approximately) solved by searching for a critical point of $\mbox{KSD}_A(P, Q)$ at which its Riemannian gradient vanishes. This is justified by the following proposition, which states that, when the probability measures $P$ and $Q$ differ in some low-dimensional subspace $[A_0]$, then $[A_0]$ is indeed a solution sought by (\ref{eq:sliced_objective_A}).
\begin{proposition}
    Suppose the conditions in Theorem~\ref{thm:KSD_separates} hold for the kernel $k$ and measures $P, Q$ with associated densities $q, p$. Assume further that the kernel satisfies a smoothness condition Assumption \ref{assumption: smoothness of kernel} in Appendix~\ref{appendix: extra assumption on the kernel}, and that there exists $[A_0] \in \textrm{Gr}(d, m)$ such that $q, p$ admit the following decomposition
    \begin{talign}
        q(x) &\propto q^m(P_0 x) \xi(\Pi_{A_0} x) \label{eq: prop optimal projection, decomposition 1}\\ 
        p(x) &\propto p^m(P_0 x) \xi( \Pi_{A_0} x),
        \label{eq: prop optimal projection, decomposition 2}
    \end{talign}
    where $P_0 \coloneqq A_0 A_0^\intercal$, $\Pi_{A_0} = I_d - A_0 A_0^\intercal$ as before, and $p^m, q^m$ and $\xi$ are smooth, positive functions that are integrable on $\bbR^d$ (i.e.\ they are unnormalised densities). Then $\riemannGrad \mbox{KSD}_A(P,Q) |_{A = A_0} = 0$.
    \label{prop: optimal projection}
\end{proposition}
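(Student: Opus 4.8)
The plan is to use the characterisation of the Riemannian gradient as the Euclidean gradient projected onto the tangent space. Since $\riemannGrad \mathrm{KSD}_A = \Pi_{A}\,\nabla_A \mathrm{KSD}_A$, it vanishes at $A_0$ precisely when $\nabla_A \mathrm{KSD}_A|_{A_0}$ is orthogonal, in the Frobenius inner product, to $\mathcal{T}_{[A_0]}\mathrm{Gr}(d,m) = \{\Delta : A_0^\intercal \Delta = 0\}$. Hence it suffices to show that the directional derivative $\tfrac{d}{dt}\mathrm{KSD}_{A_0 + t\Delta}(P,Q)\big|_{t=0}$ is zero for every tangent direction $\Delta$, where $\mathrm{KSD}_A$ is read off from the explicit double-integral formula \eqref{eq:KSD_double_integral}, which is well-defined for arbitrary $d\times m$ matrices and not only for projectors. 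The smoothness condition on the kernel (Assumption in the appendix) is exactly what licenses differentiating under the expectation.

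First I would exploit the rotational invariance of $\mathrm{KSD}_A$ (Lemma~\ref{lemma: grassmann_kernel}, together with an orthogonal change of variables on $\bbR^d$) to assume without loss of generality that $A_0 = \bigl[\begin{smallmatrix}I_m\\0\end{smallmatrix}\bigr]$. Writing $x = (u,v)$ with $u = P_0 x$ and $v = \Pi_{A_0} x$, the decompositions \eqref{eq: prop optimal projection, decomposition 1}--\eqref{eq: prop optimal projection, decomposition 2} become $q(u,v)\propto q^m(u)\xi(v)$ and $p(u,v)\propto p^m(u)\xi(v)$. The decisive structural consequences are: (i) under $Q$ the blocks $u$ and $v$ are independent, with $v$-marginal $\propto\xi$; (ii) with a tangent perturbation $\Delta = \bigl[\begin{smallmatrix}0\\\Delta_2\end{smallmatrix}\bigr]$ the score reads $A_0^\intercal s_p = \nabla\log p^m(u)$ and acquires the cross term $t\,\Delta_2^\intercal \nabla\log\xi(v)$; and (iii) because the kernel is radial, $k$ and all of its derivatives appearing in \eqref{eq:KSD_double_integral} depend on their two arguments only through the difference, which under the perturbation is $(u-u') + t\,\Delta_2^\intercal(v - v')$.

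Differentiating the three terms of \eqref{eq:KSD_double_integral} at $t=0$ and using (iii), every resulting summand factorises, by the independence in (i), into a product of an expectation over $(u,u')$ and an expectation over $(v,v')$. The $(v,v')$-factor is always of one of two types: it either contains $\bbE_{v\sim\xi}[\nabla\log\xi(v)]$, arising from differentiating the score term, or it contains $\bbE[v - v']$, arising from differentiating the radial kernel arguments. The former vanishes by Stein's identity, $\int \xi\,\nabla\log\xi\,dv = \int \nabla\xi\,dv = 0$, which is precisely where the \emph{shared} factor $\xi$ and the decay hypotheses (vanishing boundary terms) are used; the latter vanishes because $v,v'$ are i.i.d.\ under $Q$. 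Therefore $\tfrac{d}{dt}\mathrm{KSD}_{A_0+t\Delta}(P,Q)\big|_{t=0} = 0$ for every tangent $\Delta$, giving $\riemannGrad \mathrm{KSD}_A(P,Q)|_{A=A_0} = 0$.

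I expect the main obstacle to be the analytic justifications rather than the algebra: interchanging differentiation and integration (via the kernel smoothness assumption) and the vanishing of the boundary terms in the integration-by-parts yielding the mean-zero score identity, for which the integrability in the hypotheses together with the decay of $q$ from Theorem~\ref{thm:KSD_separates} must be invoked. The conceptual crux, however, is recognising that radiality forces the $v$-dependence coming from the kernel arguments to enter only through the antisymmetric combination $v - v'$; coupled with the shared $\xi$ making the $v$-component of the score of $p$ have mean zero under $Q$, this is what makes every tangential derivative collapse to zero.
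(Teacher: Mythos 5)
Your argument is essentially correct, but it takes a genuinely different route from the paper's. The paper first rewrites $\mbox{KSD}_A$ in the quadratic form \eqref{eq:KSD_quadratic} involving the score \emph{difference} $\delta_{p,q}=s_p-s_q$ (Lemma~\ref{lem:KSD_quadratic}), computes the Euclidean gradient of that expression via a first-order expansion in $A$ (Lemma~\ref{lemma: gradients of alpha}), and then kills the two resulting terms at $A_0$ by (i) the \emph{pointwise} identity $\Pi_{A_0}\delta_{p,q}(x)=0$, which follows because the product decomposition forces $\delta_{p,q}(x)=P_0\left(\nabla\log p^m(P_0x)-\nabla\log q^m(P_0x)\right)$, and (ii) the tower rule together with $\bbE[v-v'\mid w,w']=0$. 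You instead differentiate the $s_p$-only representation \eqref{eq:KSD_double_integral} directly: your cancellation of the score-derivative terms is not pointwise but in expectation, via $\bbE_{v\sim\xi}[\nabla\log\xi(v)]=0$ combined with the independence of the $P_0$- and $\Pi_{A_0}$-components under $Q$, while the kernel-derivative terms die by the same $\bbE[v-v']=0$ mechanism as in the paper. Both work; the paper's route front-loads the integration by parts into Lemma~\ref{lem:KSD_quadratic} and makes the first cancellation purely algebraic, whereas yours invokes Stein's identity on $\xi$ at the point of use.

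The one concrete weakness of your route is regularity. The third term of \eqref{eq:KSD_double_integral} is $\bbE[\Tr(\nabla_{x_1,x_2}k)]$, which for a radial kernel equals $-4\Phi''(\|\cdot\|_2^2)\|\cdot\|_2^2-2m\Phi'(\|\cdot\|_2^2)$, so differentiating it along a tangent direction requires $\Phi\in C^3$ --- strictly more than Assumption~\ref{assumption: smoothness of kernel} ($\Phi\in C^1$) and the second-order hypothesis of Theorem~\ref{thm:KSD_separates} supply. The quadratic form \eqref{eq:KSD_quadratic} contains no kernel derivatives, which is precisely why the paper gets away with $\Phi\in C^1$. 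For the Gaussian and IMQ kernels this is immaterial, but under the stated hypotheses your differentiation of the third term is not licensed; either strengthen the smoothness assumption or pass to \eqref{eq:KSD_quadratic} before differentiating. Two minor points: the reduction to $A_0$ being the stack of the first $m$ canonical basis vectors rests on equivariance of $\mbox{KSD}_A$ under the left $O(d)$-action on the ambient space (transforming $p$, $q$ and $A$ together), not on Lemma~\ref{lemma: grassmann_kernel}, which concerns the right $O(m)$-action on representatives; and the "factorisation" should be read componentwise, since the $(v,v')$-dependence enters contracted against $(u,u')$-dependent vectors --- both are routine to repair.
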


\begin{remark}
    The assumption on the decomposition (\ref{eq: prop optimal projection, decomposition 1}) and (\ref{eq: prop optimal projection, decomposition 2}) holds when the density $q$ agrees with the target $p$ except in the subspace $[A_0]$. An example is that $q$ is a prior, and $p$ is the posterior induced from $q$ and a likelihood function that depends on $x$ only through $A_0 x$. See Appendix \ref{appendix: lemma and proof of proposition 3} for more discussions and the proof.
\end{remark}

Sequentially solving the optimisation problems \eqref{eq:sliced_objective} and \eqref{eq:sliced_objective_A} every time-step yields a particle transport scheme which terminates if and only if the GKSD between the target distribution and the current particle distribution is zero, which implies $P=Q$ under the conditions of Theorem \ref{thm:KSD_separates}. 

Replacing $Q$ by an empirical distribution of particles $(x_1^0,\ldots, x_N^0)$, and denoting by $(x_1^t, \ldots, x_N^t)$ the set of particles obtained at step $t$, the scheme becomes
\begin{talign}
    \label{eq:discrete_opt_particles}
    x_{i}^{t+1}
    &= x_{i}^{t}
    +\frac{\epsilon}{N}\sum_{j=1}^{N} \big[ A_{t} A_{t}^\intercal s_p(x_j^t) k(A_{t}^\intercal x_{j}^{t},A_{t}^\intercal x_{i}^{t}) \nonumber \\
    &\quad + A_{t} \nabla_{x_1} k(A_{t}^\intercal x_{j}^{t},A_{t}^\intercal x_{i}^{t}) \big]
    ,\quad i=1,\ldots,N,\\
    \label{eq:discrete_opt}
    A_{t} &\in \arg\sup_{[A]\in\text{Gr}(d,m)}\alpha_t([A]),
\end{talign}
where $\alpha_t([A]) = \mbox{KSD}_{A}(Q_t, P)$, for $Q_t(dx) = \frac{1}{N}\sum_{i=1}^N \delta_{x_i^t}(dx)$.

\begin{algorithm}[t!]
    \caption{Grassmann Stein Variational Gradient Descent (GSVGD)}
\label{alg:gsvgd}
\begin{algorithmic}[1]
\State {\bfseries Inputs:} $M$ initialized projectors $[A_{0, l}]\in\mbox{Gr}(d, m)$, $l = 1, \ldots, M$; $N$ initialised particles $\{x^{0}_{i}\}_{i=1}^N$; iteration number $n_\text{epochs}$; step sizes $\epsilon, \delta>0$.
\vspace{1mm}
\For{$t=1,2,\ldots,n_\text{epochs}$}
\State Update particles $x_{i}^{t+1}=x_{i}^{t}+\epsilon \sum_{l=1}^M \widehat{\phi}_{A_{t, l}}(x^t_i)$ for $i=1,\ldots,N$, where
\begin{talign}
    \widehat{\phi}_{A_{t, l}}(x^t_i) 
    &= \frac{1}{N}\sum_{j=1}^{N} \big[ \textcolor{red}{A_{t, l} A_{t, l}^\intercal} s_p(x_j^t) k(\textcolor{red}{A_{t, l}^\intercal} x_{j}^{t},\textcolor{red}{A_{t, l}^\intercal} x_{i}^{t}) \nonumber \\
    &\quad + \textcolor{red}{A_{t, l}} \nabla_{x_1} k(\textcolor{red}{A_{t, l}^\intercal} x_{j}^{t},\textcolor{red}{A_{t, l}^\intercal} x_{i}^{t}) \big].
    \label{eq: gsvgd update} 
\end{talign}
\State Update projectors
\begin{align*}
    A_{t+1, l}=\textrm{R}_{[A_{t, l}]}\left( \delta \Pi_{A_{t, l}} \nabla \alpha_t([A_{t, l}]) + \sqrt{2T \delta} \Pi_{A_{t, l}} \xi_{t, l}\right)
\end{align*}
for $l = 1, \ldots, M$, where $\xi_{t, l} \in \mathbb{R}^{d \times m}$ has i.i.d.\ $\mathcal{N}(0, 1)$ entries, and $\textrm{R}_{[A_{t, l}]}$ is defined in (\ref{eq: polar retr}). 
\EndFor
\State \textbf{Return:} Particles $\{x_{i}^{n_\text{epochs}}\}_{i=1}^N$.
    \end{algorithmic}
\end{algorithm}

\paragraph{Optimal Projections via SDEs:} Proposition~\ref{prop: optimal projection} implies the ``optimal'' projection $A_0$ is \emph{one} critical point of $\alpha_t$, but it does not imply the critical point is unique. Indeed, the problem (\ref{eq:sliced_objective_A}) is non-convex and there
might be multiple local maximisers, from which finding global maxima is generally NP-hard.  A well-known remedy is to add white noise to the gradient flow \citep{aluffi1985global,geman1986diffusions}, allowing the trajectory to escape from local maxima. Mathematically, such methods can be formulated in terms of Stochastic Differential Equations (SDEs).  In this particular instance, we explore the use of SDEs taking values on the Grassmann manifold. We consider stochastic dynamics defined by the solution of the following Stratonovich SDE
\begin{talign}
\label{eq:strat}
dA(t)
&= \mu \riemannGrad \alpha_t([A])\,dt + \sqrt{2T\mu} \Pi_{A_t}\circ dW(t),\\
&= \mu \Pi_{A(t)}\nabla \alpha_t([A(t)])\,dt + \sqrt{2T\mu} \Pi_{A(t)}\circ dW(t)\nonumber
\end{talign}
where $W(t)$ is a $\mathbb{R}^{d\times m}$-valued Brownian motion and $\mu, T > 0$. Combining the particle and projector evolution we can consider the full system as a discretisation of the following coupled ODE-SDE 
\begin{talign*}
    \dot{x}_i(t)  
    &=\frac{1}{N}A(t) \sum_{j=1}^{N} \big[
    \\ & A(t)^\intercal s_p(x_j(t)) k(A(t)^\intercal x_{j}(t),A(t)^\intercal x_{i}(t)) \\
    &+  \nabla_{x_2} k(A(t)^\intercal x_{j}(t),A(t)^\intercal x_{i}(t)) \big],\\
    dA(t)
    &= \mu \Pi_{A(t)}\nabla \alpha_t([A(t)])\,dt + \sqrt{2T\mu} \Pi_{A(t)}\circ dW(t).
\end{talign*}
We see that the parameter $T$ acts as a temperature, facilitating moving between local minima of the function $\alpha$ by annealing the loss function.  The parameter $\mu$ controls the relative timescale between the evolution of $A(t)$ and the particles $(x_1^t\ldots, x_N^t)$.  In particular, when $\mu \gg 1$, the distribution of $A(t)$ is approximated by the quasi-stationary distribution $d\mu_t([A])= e^{\alpha_t([A])/T}d[A]$, where $d[A]$ denotes the canonical measure on the Grassmann manifold.  In the other extreme, when $\mu \ll 1$  the projector $A(t)$ will evolve on a slower timescale than the particles, so that the particles will reach their stationary configuration for a fixed projector before it evolves.

To simulate the Stratonovich SDE \eqref{eq:strat}, we make use of the formulation of diffusions on a Riemannian manifold, making use of the associated retraction map, see \citet{staneva2017learning}, \citet{baxendale1976measures}, and \citet{belopolskaya2012stochastic} where the $\text{Gr}(d,m)$-valued SDE \eqref{eq:strat} can be expressed in the form of a system of Ito SDEs 
\begin{talign}
A(t) &= \textrm{R}_{[A(t)]}\left(B(t)\right),\\
\label{eq:tangent_sde}
dB(t) &=  \mu \Pi_{A(t)}\nabla \alpha_t([A(t)])\,dt + \sqrt{2T\mu}\Pi_{A(t)}dW_t.
\end{talign}
With step-size $\delta > 0$, an Euler-Maruyama discretisation for \eqref{eq:tangent_sde} results in
\begin{talign}
    A_{t+1} = \textrm{R}_{[A_t]}\left(\Pi_{A_t}\nabla \alpha_t([A_t])\,\delta + \sqrt{2T\delta}\Pi_{A_t}\xi_t\right),
    \label{eq: discretized projector update}
\end{talign}
where $\xi_t$ is a $d \times m$ matrix with independent Gaussian entries.  Note that we ``absorb'' the constant $\mu$ into the step-size $\delta$ for simplicity.

\paragraph{Batch Generalisation} 
An important observation is that at each step (\ref{eq:discrete_opt}), $x^{t+1}_i$ is different from $x^{t}_i$ only in the image of $A$. This contrasts with both SVGD as well as S-SVGD, both of which update the particles along all dimensions. One extension is to use more than one projector $A_{t, l} \in \bbR^{m_l \times d}$ for $l = 1, \ldots, M$, where $1 \leq M \leq d$ and $\sum_{l = 1}^M m_l \leq d$. At each optimisation step, we can impose orthonormality using QR factorisation, so that the $d \times d$ matrix $A_t$ formed by column-wise concatenation of $A_{t, 1}, \ldots, A_{t, M}$, is orthogonal. However, this requires $\calO(d^3)$ operations, which can be prohibitive for large $d$. We therefore propose to impose orthonormality every 1000 steps, which we find sufficient in practice. The full algorithm is presented in Algorithm \ref{alg:gsvgd}.

\paragraph{Computational Complexity}
Each iteration of the particles update in Algorithm~\ref{alg:gsvgd} requires $\calO(M N dm(N + d))$ operations. Compared with the $\calO(Nd (N + d))$ cost for S-SVGD, the extra factor $Mm$ arises from the fact that (\ref{eq: gsvgd update}) involves matrix multiplication with $A_{t, l}$, which can be avoided in S-SVGD due to simplification by using the canonical basis. However, this extra computation can be largely reduced by using a fixed number of projectors $M$ and projection dimension $m$. In our experiments, we find $M=20$ projectors are sufficient. A reasonable choice of $m$, however, will depend on the specific problem. Finally, each step of the projector update requires $\calO(M dm^2)$ due to the retraction map, which reduces to the same cost as the slice update in S-SVGD when $M = d$ and $m = 1$.

\section{RELATED WORK}
\label{sec:related_work}
\paragraph{Connections to S-SVGD}

The major differences between GSVGD and S-SVGD are that \emph{(i)} GSVGD allows projecting onto a arbitrary low-dimensional subspace, \emph{(ii)} GSVGD updates the projectors via Riemannian optimisation on the Grassmann manifold, whereas S-SVGD optimises the slices in the unit ball of $\bbR^d$, and \emph{(iii)} the projectors for the score function and the data are set to the same in GSVGD but not in S-SVGD. In particular, S-SVGD slices the score function with a fixed basis $O$ (i.e.\ the canonical basis), and such arbitrary choice may result in inefficient exploration of the state-space if the basis vectors $r \in O$ do not align with the latent dimensions of the target. On the other hand, optimising the projectors for the score function and the data separately would lead to a complex joint optimisation program that is practically challenging. GSVGD strikes a balance by using the same projector for the score function and the data.

\paragraph{Other SVGD Variants}
\citet{liu2018riemannian} extends SVGD to tasks defined on Riemannian manifolds and \cite{shi2021sampling} proposes a variant applied to non-Euclidean spaces (such as a simplex) via mirror descent. These are fundamentally different from our work, which concerns problems on the Euclidean space, while evolving the projectors on the Grassmann manifold. Closer to our method is the pSVGD of \citet{chen2020projected}, where the particles are projected onto a subspace defined by the top eigenvectors of a gradient information matrix. This method however suffers from practical issues, e.g.\ finding the eigen-decomposition would incur a computational cost that is cubic in the dimension. Other attempts to alleviate the curse-of-dimensionality problem of SVGD include Message Passing SVGD \citep{zhuo2018message} and graphical SVGD \citep{wang2018stein}, both of which are limited to problems where the target distribution is defined on a \emph{probabilistic graphical model} (PGM) with a known Markov structure. In contrast, GSVGD can be applied to distributions of an arbitrary form. 

%

\section{EXPERIMENTS}
\label{sec: experiments}
We study the uncertainty quantification property of GSVGD against existing methods such as SVGD and S-SVGD on a number of experiments. We conclude from our experiments that SVGD can underestimate the uncertainty, S-SVGD can overestimate it and GSVGD, with appropriate projection dimensions, yields the best estimates. For GSVGD, we use the batch approach of Algorithm~\ref{alg:gsvgd}, where at most $M=20$ projectors with the same projection dimension $m$ are used. The temperature $T$ is set to a small value and is gradually incremented to a larger one via an annealing scheme. The learning rates for all methods are tuned for optimal performance on the multimodal mixture example, and the same choices are used for the other experiments.

In all experiments, we use the Gaussian RBF kernel parameterised by $k(x, x^\prime) = \exp\left(  -\| x - x^\prime \|_2^2/(2\sigma^2)\right)$ with bandwidth $\sigma^2 = \texttt{med}^2 / (2 \log n)$, where $\texttt{med}$ is the median of the pairwise distance of the particles. This follows the heuristic in \citet{Liu2016SVGD}. To quantify sample quality, we use the \emph{energy distance} \citep{szekely2013energy} between the approximation and the ground truth. However, we find that the energy distance is not sensitive to differences in correlations between multivariate samples. Therefore, we also evaluate how well each method captures the dependence between multivariate samples by the covariance estimation error $||\hat{\Sigma} - \Sigma||_{2}$, where $||\cdot||_2$ is the Frobenius norm, $\Sigma$ is the ground truth sample covariance matrix given by a Hamiltonian Monte Carlo (HMC) (for non-synthetic data), and $\hat{\Sigma}$ is the estimated covariance. For further details, see Appendix~\ref{appendix: experimental details}. All the code is available at \url{https://github.com/ImperialCollegeLondon/GSVGD}.

\begin{figure}
    \centering
    \includegraphics[width=0.23\textwidth]{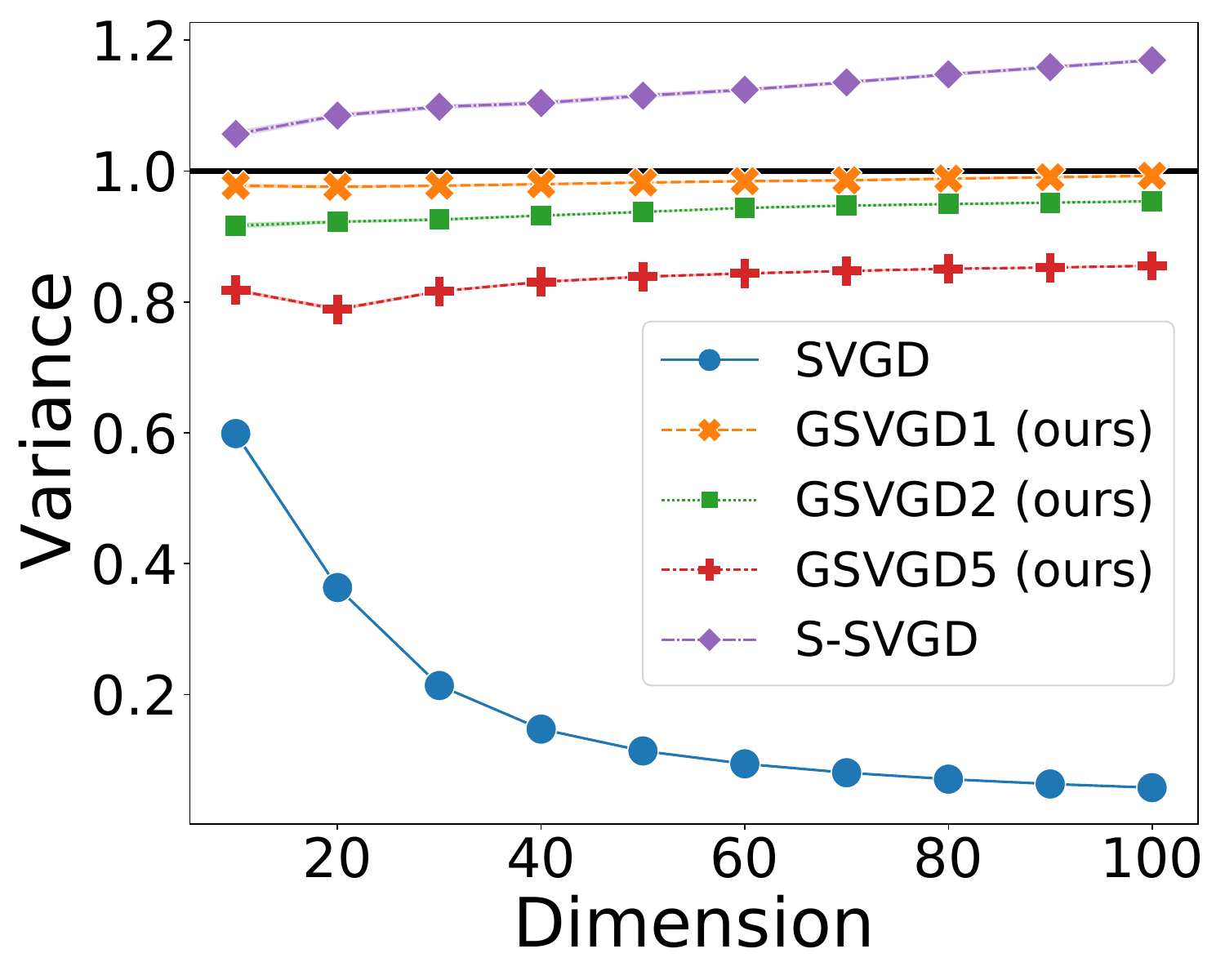}
    \vspace{-3mm}
    \caption{Estimating the dimension-averaged marginal variance of $p(x) = \calN(x; 0, I_d)$ across different dimensions $d$. GSVGD1 means GSVGD with 1-dimensional projections. Black solid line marks the true value. Results are averaged over 20 repetitions.}
    \label{fig: gaussian marginal var}
\end{figure}

\paragraph{Multivariate Gaussian}
The first toy example is a $d$-dimensional multivariate Gaussian $p(x) = \calN(x; 0, I_d)$. For each method, 500 particles are initialized from $\calN(x; 2 \mathds{1}, 2I_d)$, where $\mathds{1} \in \bbR^d$ denotes the vector of ones. This is a standard benchmark used to illustrate the diminishing variance issue of SVGD \citep{Liu2016SVGD, gong2021sliced, zhuo2018message}. We show empirically that GSVGD is able to mitigate this problem by plotting the dimension-averaged estimates for the marginal variance in Figure~\ref{fig: gaussian marginal var}. We see that GSVGD estimates stay roughly at 1 for all dimensions as expected, whereas the SVGD estimates scale inversely with dimension. S-SVGD, on the other hand, over-estimates the variance. We remark, however, that GSVGD5 appears to be biased with a lower variance. This is due to the variance underestimation issue of SVGD persisting in 5 dimensional subspaces.

\paragraph{Multimodal Mixture}
In the second example, the target distribution is a mixture of $4$ $d$-dimensional Gaussian distributions $p(x) = \sum_{k = 1}^{4} 0.25 \mathcal{N}(x; \mu_k, I_d)$ with uniform mixture ratios. The first two coordinates of the mean vectors are equally spaced on a circle, while the other coordinates are set to 0; see Figure~\ref{fig: multimodal particles 50d} and Appendix~\ref{appendix:multimodal}. Particles are initialized from $\calN(0, I_d)$ and only the first two dimensions need to be learned. The primary goal is to investigate whether GSVGD can efficiently recover this low-dimensional latent structure by adapting the projectors. As shown in \ref{fig: multimodal energy}, GSVGD1 and GSVGD2 outperforms S-SVGD in all dimensions in terms of the energy distance, while GSVGD5 is able to achieve a competitive performance. SVGD, on the other hand, fails to capture the uncertainty of the target, as shown in Figure~\ref{fig: multimodal particles 50d}. 

\begin{figure}[t!]
    \centering
    \subfigure[Multimodal mixture]
    {
        \label{fig: multimodal particles 50d}
        \includegraphics[width=0.22\textwidth]{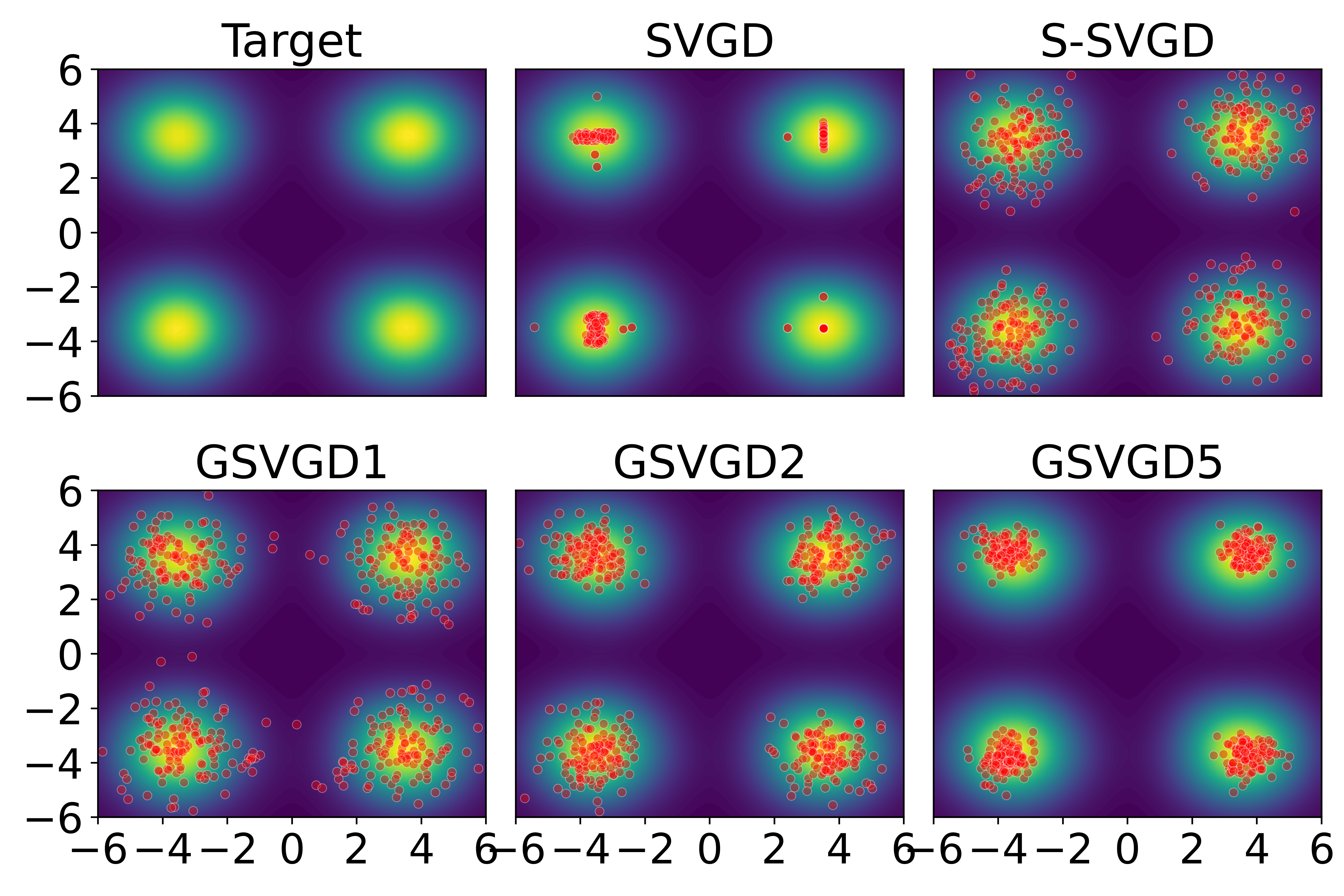}
    }
    \subfigure[X-shaped mixture]
    {
        \label{fig: xshaped particles 50d}
        \includegraphics[width=0.22\textwidth]{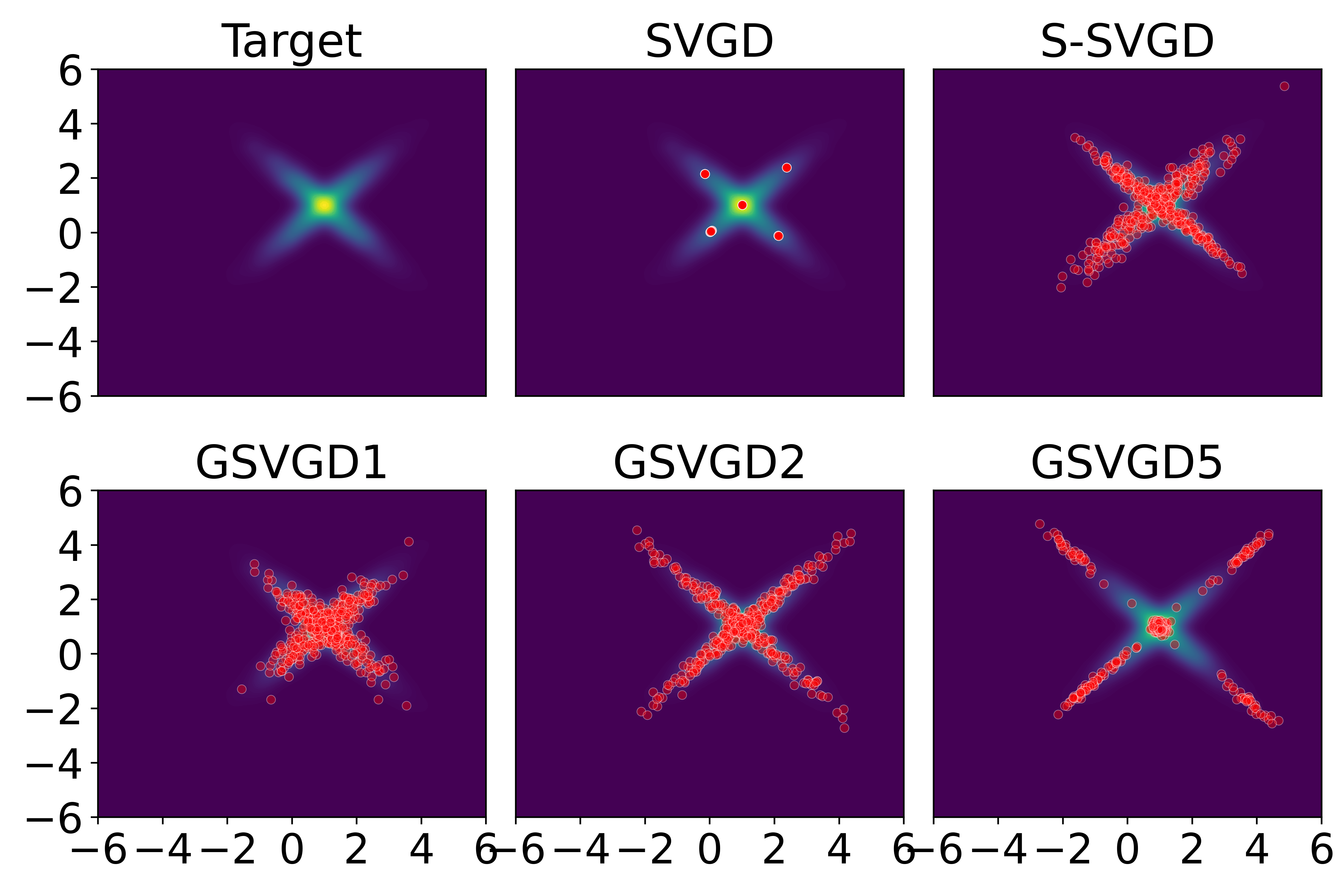}
    }
    \vspace{-4mm}
    \caption{Marginals of target and estimated densities in the first 2 dimensions. Red dots are the particles after 2000 iterations and the target density is shown by the contour. Both targets have dimension 50.}
    \label{fig: final particles}
\end{figure}

\begin{figure}[t!]
    \centering
    \subfigure[Multimodal mixture]
    {
        \label{fig: multimodal energy}
        \includegraphics[width=0.22\textwidth]{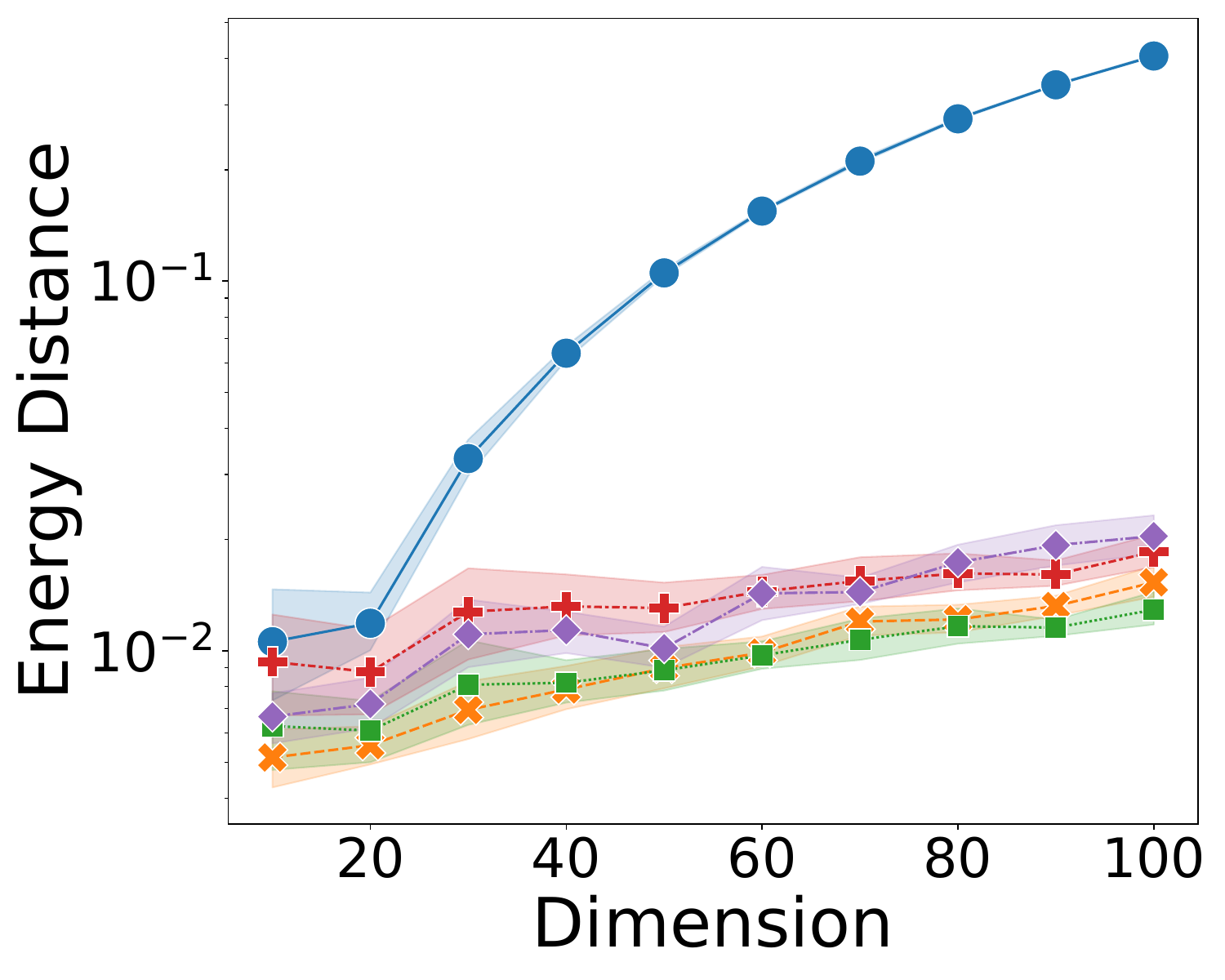}
    }
    \subfigure[X-shaped mixture]
    {
        \label{fig: xshaped energy}
        \includegraphics[width=0.22\textwidth]{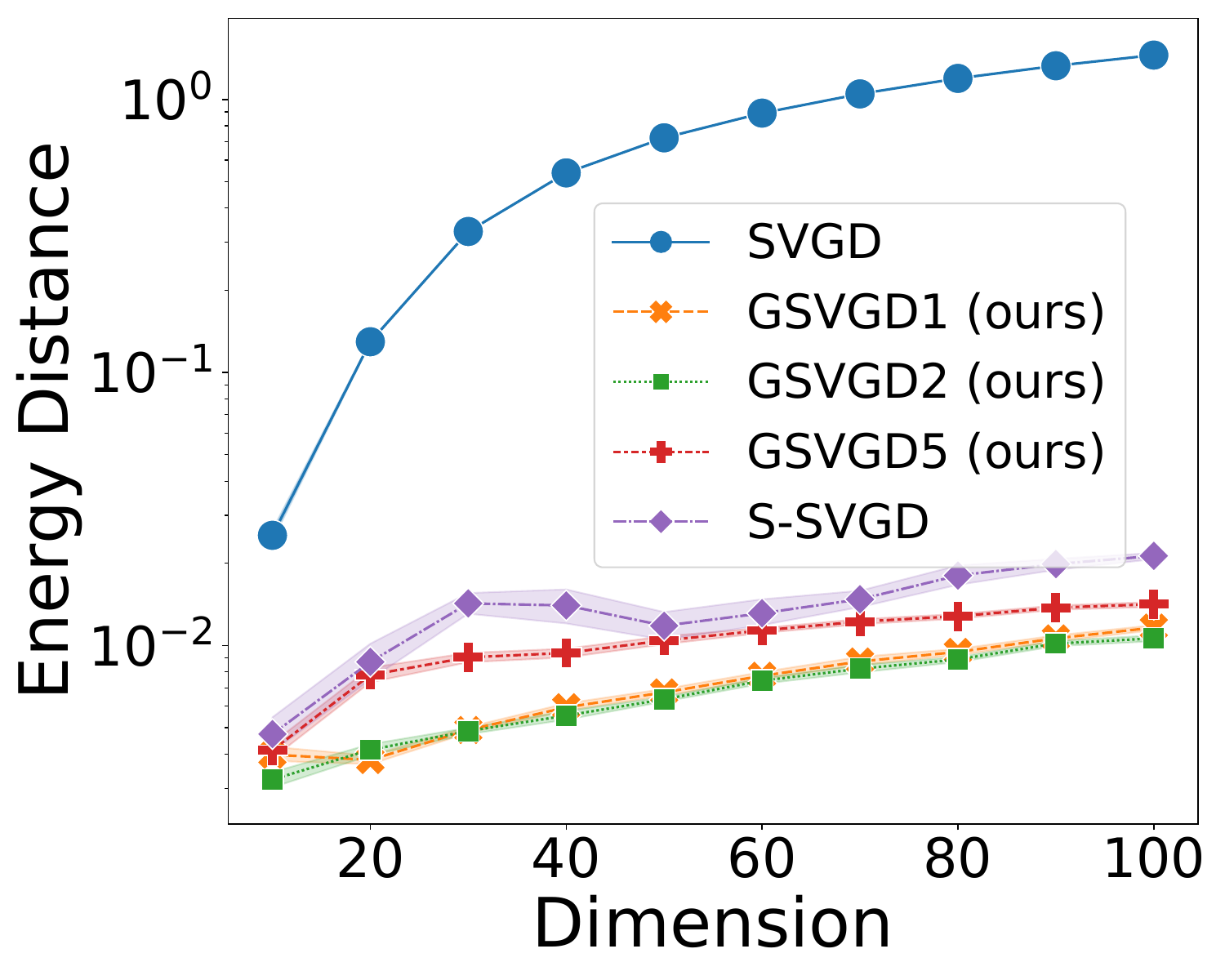}    
    }
    \vspace{-4mm}

    \caption{Energy distance between the target distribution and the particle estimation. Results are averaged over 20 repetitions. $95\%$-confidence intervals are shown by the shaded regions.}
    \label{fig: summary metrics}
\end{figure}

\paragraph{X-Shaped Mixture}
The target in this experiment is a $d$-dimensional mixture of two correlated Gaussian distributions $p(x) = 0.5 \mathcal{N}(x; \mu_1, \Sigma_1) + 0.5 \mathcal{N}(x; \mu_2, \Sigma_2)$. The means $\mu_1, \mu_2$ of each Gaussian have components equal to 1 in the first two coordinates and 0 otherwise, and the covariance matrices admit a correlated block diagonal structure (see Appendix~\ref{appendix:xshaped}). The mixture hence manifests as an ``X-shaped'' density marginally in the first two dimensions (see Figure~\ref{fig: xshaped particles 50d}). Figure~\ref{fig: xshaped energy} shows that GSVGD1 and GSVGD2 achieve a better approximation quality than both SVGD and S-SVGD across all dimensions. This is further verified by plotting the first two coordinates of the final particles in \ref{fig: final particles} (b). On the other hand, GSVGD5 shows a slightly suboptimal performance for dimensions larger than 30. This is potentially because of the inefficiency due to projecting the particles to 5 dimensional subspace in the case we only care about the first 2.

\paragraph{Conditioned Diffusion Process}

\begin{figure}[t!]
    \centering
    \includegraphics[width=0.48\textwidth]{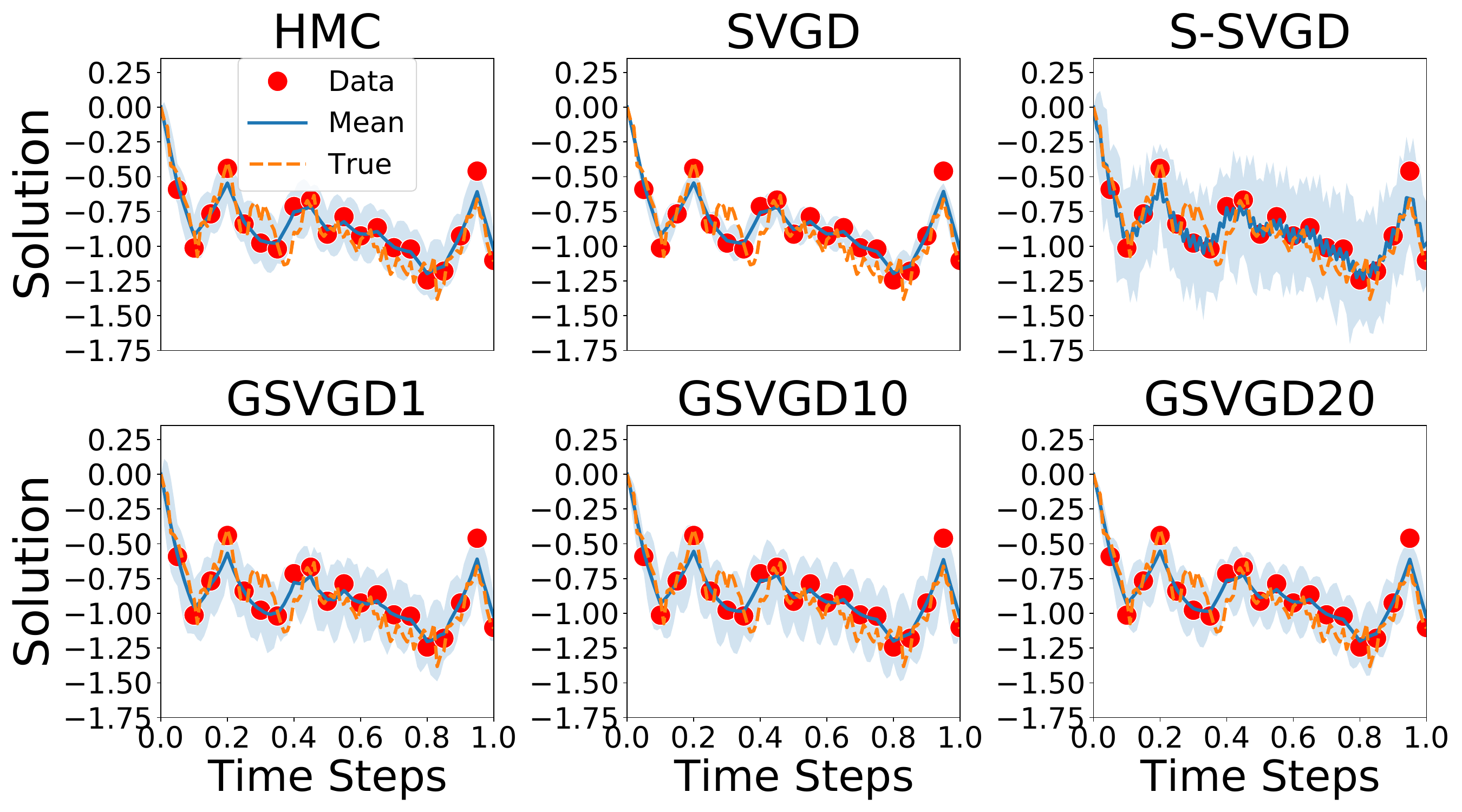}
    \vspace{-8mm}
    \caption{HMC, SVGD, S-SVGD and GSVGD solutions, as well as the posterior mean and $95\%$ confidence interval after 1000 iterations. }
    \label{fig: diffusion solution}
\end{figure}

The next example is a benchmark that is often used to test inference methods in high dimensions \citep{cui2016dimension, chen2020projected, detommaso2018stein}. We consider a stochastic process $u: [0, 1] \to \bbR$ governed by
\begin{talign}
    du_t = \frac{10u(1 - u^2)}{1 + u^2} dt + dx_t, 
    \quad
    u_t = 0,
    \label{eq: diffusion process}
\end{talign}
where $t \in (0, 1]$, and the forcing term $x = (x_t)_{t \geq 0}$ follows a Brownian motion so that $x \sim q = \calN(0, C)$ with $C(t, t') = \min(t, t')$. We observe noisy data $y = (y_{t_1}, \ldots, y_{t_{20}})^\intercal \in \bbR^{20}$ at 20 equi-spaced time points $t_i = 0.05i$, where $y_{t_i} = u_{t_i} + \epsilon$ for $\epsilon \sim \calN(0, \sigma^ 2)$ with $\sigma = 0.1$, and $u_{t_i}$ is generated by solving (\ref{eq: diffusion process}) at a true Brownian path $x_{\textrm{true}}$. The goal is to use $y$ to infer the forcing term $x$ and thereby the solution state $u$. 

The prior we use for $x$ is the Brownian motion described above. The dynamic is discretized using an Euler-Maruyama scheme with step size $\Delta t = 10^ {-2}$, leading to a 100-dimensional inference problem. Due to the smoothness of the Brownian path, we expect the solution $u$ to have an intrinsic low-dimensional structure. Figure~\ref{fig: diffusion solution} shows the mean and 95$\%$ credible intervals of the particle estimation of $u$ after 1000 iterations. Compared with a reference HMC, all methods are able to estimate the mean accurately, but SVGD and S-SVGD either underestimate or overestimate the credible intervals. GSVGD projecting on $20$ dimensions gives the most accurate uncertainty estimates. Choosing a good projection dimension is non-trivial even in this simple problem; however, by plotting the energy distance between the HMC reference and the results against different projection dimensions (Figure \ref{fig: diffusion energy} of Appendix), we see that so long as we do not project onto very low or high dimensions, GSVGD achieves a greater degree of agreement with the reference HMC than SVGD and S-SVGD. 

\paragraph{Bayesian Logistic Regression} 
Following \cite{Liu2016SVGD}, we consider the Bayesian logistic regression model applied to the Covertype dataset \citep{asuncion2007uci}. We use HMC posterior samples as a gold standard for evaluating the posterior approximations. We ran all methods on a subset consisting 1,000 randomly selected data points 10 times. Similar to the synthetic experiments, we see in Figure~\ref{fig:blr:covariance_estimation} that there exists an optimal projection dimension (around $m=10$) such that GSVGD more accurate approximates the covariance matrix whilst yielding similar energy distances as S-SVGD. S-SVGD severely overestimates the uncertainty, also shown in Figure~\ref{fig:blr:cov.pdf}, whereas SVGD severely underestimates it.

\begin{figure}[t!]
\centering
\subfigure[Energy distance against projection dimensions]{
\includegraphics[width=0.22\textwidth]{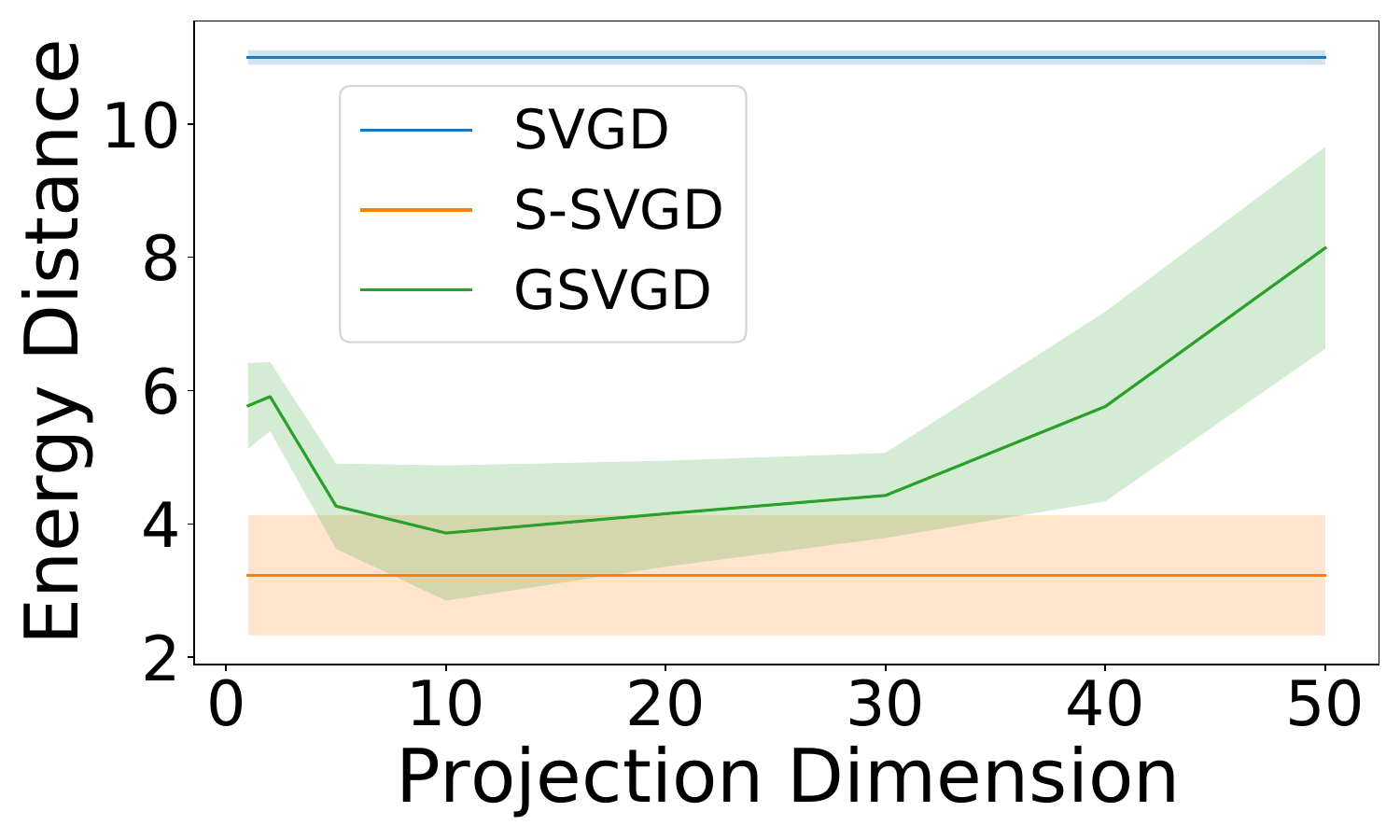}
}
\label{fig:blr:energy distance}
\subfigure[$||\hat{\Sigma}-\Sigma||_2$ against projection dimensions]{
\includegraphics[width=0.22\textwidth]{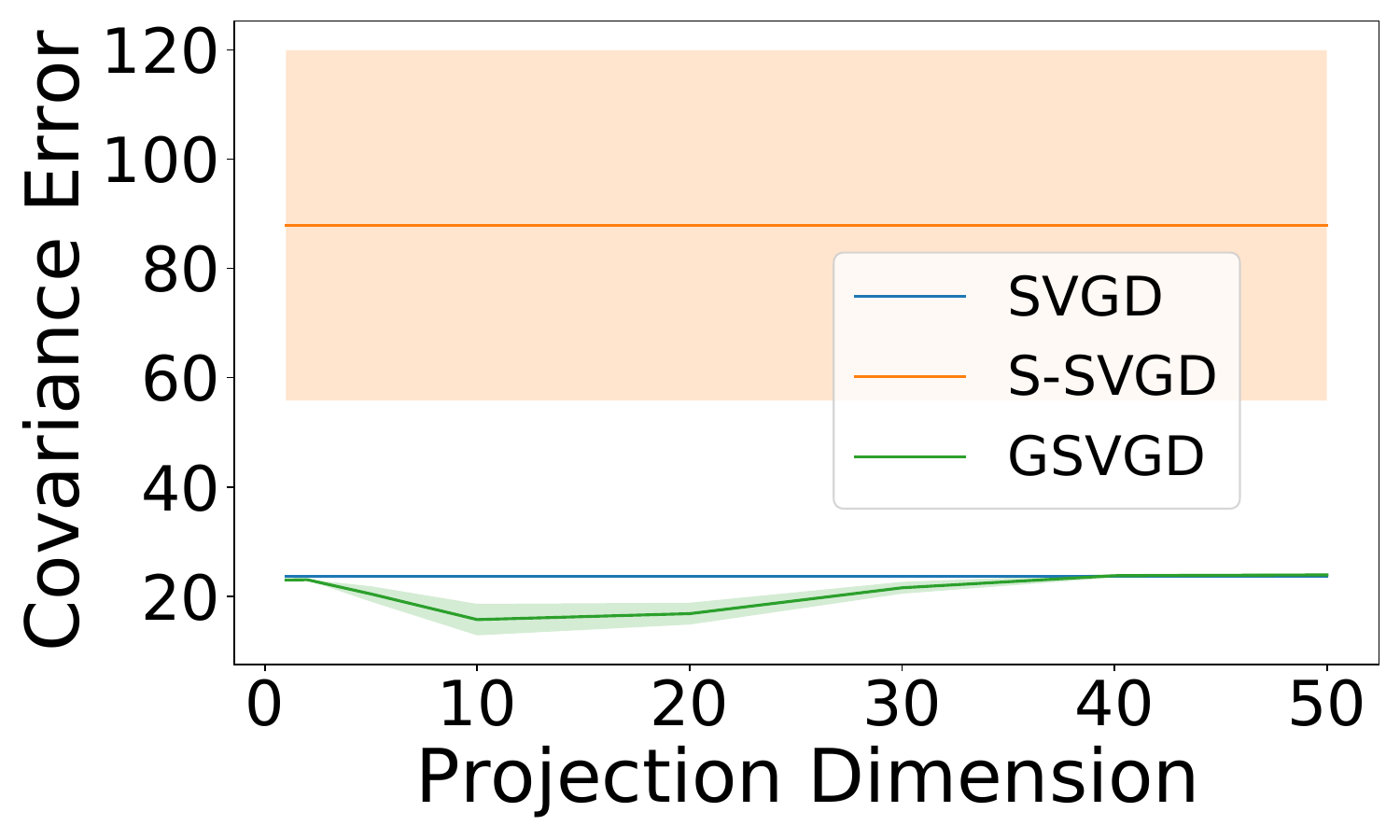}
}
\vspace{-3mm}
\caption{Metrics for Covertype.}
\label{fig:blr:covariance_estimation}
\end{figure}

\section{CONCLUSION}
\label{sec: conclusion}
We introduced Grassmann Stein Variational Gradient Descent (GSVGD), a novel extension to the SVGD algorithm, to tackle the curse-of-dimensioanlity problem. At each step, GSVGD seeks the subspace in which the proposal and the target has the largest discrepancy measured by the Grassmann Kernel Stein Discrepancy (GKSD). The evolution of the particles and the projector is determined through a coupled ODE-SDE system, which allows trade-offs between exploration and exploitation. The GSVGD algorithm can be easily extended to a batched version that uses more than one projector. Our experiments demonstrate that GSVGD is able to achieve improved performance over existing methods when the target distribution has an intrinsic low-dimensional structure, especially more accurately quantifying the epistemic uncertainty.

\paragraph{Limitations of GSVGD:} One limitation of GSVGD is the choice of projection dimension and number of projectors to use in order to attain superior performance. One possible way is to perform a grid search over a range of feasible values and then assess the performance using metrics that measure particle diversity. We leave this as an open question for future work. In addition, our Grassmann-valued SDE introduces additional hyperparameters, which potentially require tuning on a case-by-case basis for each problem. In Section~\ref{sec: experiments} we present some heuristics for tuning GSVGD hyperparameters that we found yielded consistent results, but we would recommend future work to study the sensitivity of the hyperparameters.


\begin{figure}[t!]
\centering
\includegraphics[width=0.45\textwidth]{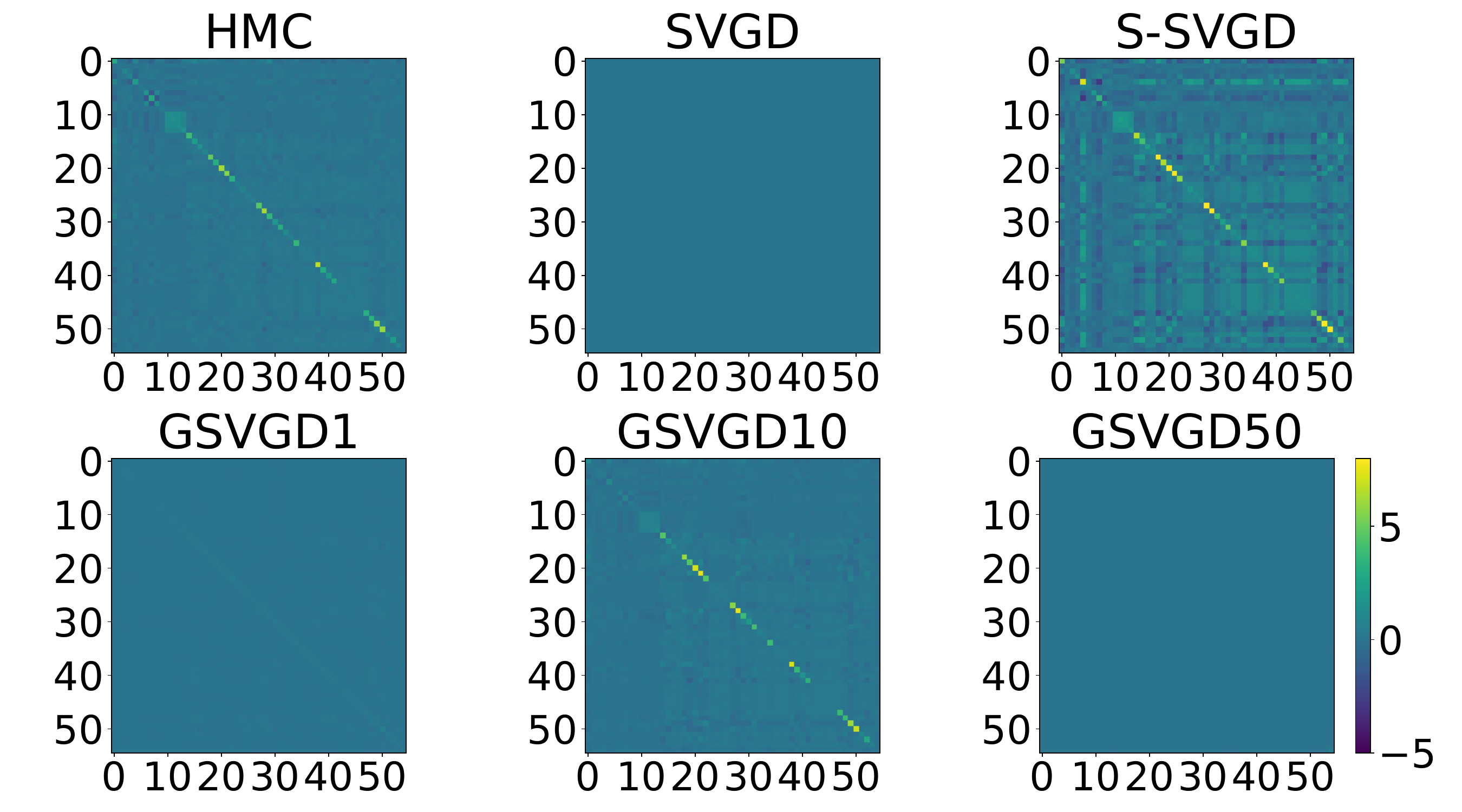}
\vspace{-3mm}
\caption{Covariance matrices for Covertype.}
\label{fig:blr:cov.pdf}
\end{figure}

\subsubsection*{Acknowledgements}
XL was supported by the President’s PhD Scholarships of Imperial College London and the EPSRC StatML CDT programme EP/S023151/1. HZ was supported by the EPSRC Centre for Doctoral Training in Modern Statistics and Statistical Machine Learning EP/S023151/1, the Department of Mathematics of Imperial College London and Cervest Limited. JFT was supported by the EPSRC and MRC through the OxWaSP CDT programme EP/L016710/1. GW was supported by an EPSRC Industrial CASE award EP/S513635/1 in partnership with Shell UK Ltd. AD work was supported by Wave 1 of The UKRI Strategic Priorities Fund under the EPSRC Grant EP/W006022/1, particularly the ``Foundations of Ecosystems of Digital Twins'' theme within that grant \& The Alan Turing Institute


\bibliography{ref}


\clearpage
\appendix

\onecolumn \makesupplementtitle

\section{FURTHER BACKGROUND MATERIALS ON THE GRASSMANN MANIFOLD}
\label{appendix: grassmann manifold}

We provide a brief overview of the Grassmann manifold and optimisation on the Grassmann manifold. We focus only on the essential components required to understand the proposed method; a thorough treatment of these topics is beyond the scope of the paper. For more details, we refer the interested readers to \citet{bendokat2020grassmann} regarding the theoretical perspectives of the Grassmann manifold, and \citet{boumal2020introduction, AbsMahSep2008} regarding optimisation on Grassmann and other manifolds.

\paragraph{The Stiefel and Grassmann Manifolds}
The \textit{Grassmann manifold} $\text{Gr}(d,m)$ \cite[Section 9]{boumal2020introduction} is the space of all $m$-dimensional linear subspaces in $\bbR^{d}$. Each element of $\text{Gr}(d,m)$ is an equivalence class of $d \times m$ matrices whose columns form an orthonormal basis for the same subspace. To this end, define the \emph{Stiefel manifold} $\text{St}(d,m):=\{A\in\mathbb{R}^{d\times m}: A^\intercal A = I_m \}$, where $I_{m}$ is the $m\times m$ identity matrix. That is, \text{St}(d,m) is the set of projectors of rank $m$ as defined in Section~\ref{sec:background}. The Grassmann manifold is built from the Stiefel manifold by identifying matrices whose columns span the same subspace. Formally, define an equivalence relationship $\sim$ as $A\sim B \Longleftrightarrow A = B C$ for any $C\in O(m)$, the group of orthogonal matrices in $\bbR^{m \times m}$. Gr($d,m$) is the quotient space 
\begin{talign*}
    \text{Gr}(d,m) 
    &= \text{St}(d,m)/\sim\:=\{[A]: A\in\text{St}(d,m)\}, \quad
    \textrm{where } \:[A]
    =\left\{B\in\text{St}(d,m): A\sim B\right\}.
\end{talign*}
The Grassmann manifold is an instance of \emph{quotient manifolds} on which we can define smooth functions; see \citet[Chapter 9]{boumal2020introduction} for a full treatment of quotient manifolds and \citet[Chapter 3]{boumal2020introduction} for the precise definition of smoothness of a manifold-valued function. This allows the use of gradient-type optimisation to find the critical points of a smooth function. A function $\calF$ defined on $\text{Gr}(d, m)$ differs from one defined on $\text{St}(d, m)$ in that the former must preserves invariance of the chosen projector; that is, $\calF([A])$ depends only on the subspace $[A] \in \text{Gr}(d, m)$ but not on $A$. An example is the matrix-valued kernel defined in Lemma~\ref{lemma: grassmann_kernel}: $A \mapsto k_A(x, y) = A A^\intercal k(A^\intercal x, A^\intercal y)$, $x, y \in \bbR^d$.

\paragraph{Tangent Spaces and the Riemannian Gradient}
To define the Riemannian gradient $\riemannGrad \mathcal{F}([A])$ of a smooth function $\mathcal{F}$ on $\text{Gr}(d,m)$, we endow $\text{Gr}(d,m)$ with the standard inner matrix product $g_{[A]}(\Delta, \tilde{\Delta}) \coloneqq \langle \Delta, \tilde{\Delta} \rangle_0 = \Tr(\Delta^\intercal \tilde{\Delta})$ restricted to $\Delta, \tilde{\Delta} \in \text{Gr}(d, m)$. The Riemannian gradient is then defined by the relationship
\begin{talign*}
  g_{[A]}(\riemannGrad \mathcal{F}([A]), \Delta) 
  &= \frac{d}{d\delta}\mathcal{F}(\textrm{R}_{[A]}(\delta\Delta))\Big|_{\delta= 0},
\end{talign*}
for $\Delta \in \mathcal{T}_{[A]}\text{Gr}(d,m) \coloneqq \{ \Delta \in \bbR^{d \times m}: A^\intercal \Delta = 0 \}$, and $\textrm{R}_{[A]}(\cdot)$ is a \emph{retraction} \cite[Definition 3.41]{boumal2020introduction} that send $[A]$ along a chosen direction while remaining on the Grassmann manifold. In particular, $\mathcal{T}_{[A]}\text{Gr}(d,m)$ is called the \emph{tangent space} at $[A]$ \cite[Definition 3.10]{boumal2020introduction}. For the Grassmann manifold, the Riemannian gradient is simply the orthogonal projection of the standard gradient $\nabla \calF([A])$ to the tangent space: $
\riemannGrad \calF([A]) =  \Pi_A \nabla \calF([A])$, where $\Pi_A = I_d - A A^\intercal$, and $[\nabla \calF([A])]_{ij} = \frac{\partial}{\partial A_{ij}} \calF([A]) $. 

\paragraph{Riemannian Gradient Descent}
To find a critical point of $\calF$ where its Riemannian gradient is zero, \emph{Riemannian gradient descent} (RGD), which generalised the standard gradient descent to a Riemannian manifold, can be used. In RGD, we start from $[A_0] \in \text{Gr}(d, m)$ and iterate $[A_{t + 1}] = \textrm{R}_{[A_t]}(- \delta \riemannGrad \calF([A_t]))$ for $t = 0, 1, \ldots$ and step size $\delta > 0$. Intuitively, $\riemannGrad \calF([A_t])$ finds the direction of steepest descent at $[A_t]$, and the retraction $\textrm{R}_{[A_t]}(\cdot)$ is the action of moving along $\text{Gr}(d, m)$ in that direction.

\section{PROOF of LEMMA \ref{lemma: grassmann_kernel}}
\label{appendix: proof of lemma grassmann_kernel}
\begin{proof}
One can show (e.g.\ see \cite{bendokat2020grassmann}) that  $B \in [A]$ if and only if there exists an orthogonal matrix $C \in \mathbb{R}^{m\times m}$ such that $B = AC$.  Then, using the orthogonality of $C$ we obtain
\begin{talign*}
k_{B}(x,y)  = (AC) (AC)^\intercal k((AC)^\intercal x, (AC)^\intercal y)  &=  A C C^\intercal A^\intercal  k(C^\intercal A^\intercal x, C^\intercal A^\intercal y) \\
&=  A A^\intercal \Psi\left(\lVert C^\intercal (A^\intercal x - A^\intercal y)\rVert_2\right)\\
&= A A^\intercal \Psi\left(\lVert A^\intercal x - A^\intercal y\rVert_2\right) = k_A(x,y),
\end{talign*}
for all $x,y\in \mathbb{R}^d$.  

\end{proof}

\section{PROOF of THEOREM \ref{thm:KSD_separates}}
The idea of the proof is to rewrite GKSD as a double integral type discrepancy, similar to \citet[Theorem 3.6]{Liu2016}, and the proof shall largely follow. We will then conclude that the discrepancy distinguishes $p,q$ given the assumptions on the kernel. 

The main difference to \citet[Theorem 3.6]{Liu2016} is that we shall be using a matrix-valued kernel formed from the projection $A$, as written in \eqref{eq:KSD_A}, rather than a standard scalar-valued kernel. This means we must introduce some extra notation. First, given a density $q$ on $\bbR^d$, we will be using the operator $\calA_{q}f(x) = s_{q}(x)^{\intercal}f(x) + \nabla\cdot f(x)$ for functions $f\colon\bbR^{d}\rightarrow\bbR^{d}$ and abuse notation so that for functions $F\colon\bbR^{d}\rightarrow\bbR^{d\times m}$ we have $\calA_{q}F(x) = (\calA_{q}F_{1}(x),\ldots,\calA_{q}F_{m}(x)) \in \bbR^{1\times m}$ where $F_{i}\colon\bbR^{d}\rightarrow\bbR^{d}$ is the $i$-th column of $F$. 

The next lemma is a vectorised version of \citet[Lemma 2.2, Lemma 2.3]{Liu2016} and the proof follows by applying those results elementwise to $F$. 

\begin{lemma}\label{lem:Steins_identity}
Under the assumptions on $q$ in Theorem \ref{thm:KSD_separates}, if $F\colon\bbR^{d}\rightarrow\bbR^{d\times m}$ has bounded entries then 
\begin{talign*}
    \bbE_{Q}\left[(s_{p}(x)-s_{q}(x))^{\intercal}F(x)\right] = \bbE_{Q}\left[\calA_{p}F(x)\right].
\end{talign*}
\end{lemma}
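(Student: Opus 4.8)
The plan is to reduce the matrix-valued identity to the scalar-valued Stein identity of \citet{Liu2016} applied one column at a time, so the only genuine analytic content sits in a single integration-by-parts step. First I would recall the classical Stein identity for $q$: under the stated assumptions, for any bounded continuously differentiable vector field $f\colon\bbR^{d}\rightarrow\bbR^{d}$ one has $\bbE_{Q}[\calA_{q}f(x)] = 0$. This follows from the observation $q(x)s_{q}(x) = q(x)\nabla_{x}\log q(x) = \nabla_{x} q(x)$, so that
\begin{talign*}
    \bbE_{Q}[\calA_{q}f(x)]
    &= \int_{\bbR^{d}} \big[\nabla q(x)^{\intercal} f(x) + q(x)\,\nabla\cdot f(x)\big]\,dx \\
    &= \int_{\bbR^{d}} \nabla\cdot\big(q(x) f(x)\big)\,dx ,
\end{talign*}
and the right-hand integral vanishes by the divergence theorem because the boundary flux of $qf$ tends to $0$ as $\|x\|_{2}\rightarrow\infty$ (here $q(x)\rightarrow 0$ by hypothesis and $f$ is bounded). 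This is precisely where boundedness of the test field enters, and it is the one step that requires care.

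Next I would subtract this identity from $\bbE_{Q}[\calA_{p}f(x)]$. Since $\calA_{p}f - \calA_{q}f = (s_{p}-s_{q})^{\intercal}f$ — the divergence terms $\nabla\cdot f$ coincide and cancel — and $\bbE_{Q}[\calA_{q}f(x)]=0$, we obtain the scalar version
\begin{talign*}
    \bbE_{Q}[\calA_{p}f(x)] = \bbE_{Q}\big[(s_{p}(x)-s_{q}(x))^{\intercal} f(x)\big]
\end{talign*}
for every bounded $f$, which is exactly \citet[Lemma 2.2, Lemma 2.3]{Liu2016} in our notation.

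Finally I would vectorise. Writing $F = (F_{1},\ldots,F_{m})$ with columns $F_{i}\colon\bbR^{d}\rightarrow\bbR^{d}$, each $F_{i}$ inherits boundedness from $F$, so the scalar identity applies to every column. By the definition $\calA_{p}F(x) = (\calA_{p}F_{1}(x),\ldots,\calA_{p}F_{m}(x))$ together with the fact that the $i$-th entry of the row vector $(s_{p}(x)-s_{q}(x))^{\intercal}F(x)$ is $(s_{p}(x)-s_{q}(x))^{\intercal}F_{i}(x)$, stacking the $m$ column-wise identities into a single $1\times m$ row vector yields the claim. The main obstacle is purely the boundary-term justification in the first step; the remainder is routine bookkeeping over the columns of $F$.
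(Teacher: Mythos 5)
Your proposal is correct and follows essentially the same route as the paper: the paper simply cites \citet[Lemma 2.2, Lemma 2.3]{Liu2016} for the scalar identity and applies it elementwise to the columns of $F$, which is exactly your final vectorisation step, and your divergence-theorem derivation is just the content of those cited lemmas spelled out. The only point worth noting is that the vanishing of the boundary flux requires $q(x)f(x)$ to decay fast enough that the surface integral over spheres of radius $R$ (whose area grows like $R^{d-1}$) tends to zero, which is precisely the Stein-class condition the paper's boundedness assumption is meant to secure.
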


\begin{remark}
    The boundedness condition on $F$ is to ensure it lies in the \emph{Stein class} of $q$ \cite[Definition~2.1]{Liu2016}, since $q$ is assumed to be supported on all of $\bbR^d$, so that \citet[Lemma2.2, Lemma2.3]{Liu2016} can be applied. A similar condition on $F$ can be imposed when the support of $q$ is a compact subset $\mathcal{X}$ of $\bbR^d$. 
\end{remark}

Now we derive the analogy to \citet[Theorem 3.6]{Liu2016} using Lemma \ref{lem:Steins_identity}, the proof is largely the same as the proof of \citet[Theorem 3.6]{Liu2016}, which we include for clarity. 

\begin{lemma}\label{lem:KSD_quadratic}
Under the assumptions in Theorem \ref{thm:KSD_separates} the projected KSD can be expressed as follows 
\begin{talign}
    \emph{GKSD}(Q,P) = \sup_{[A]\in \emph{Gr}(d,m)}\bbE_{x, x' \sim Q}\left[(A^\intercal \delta_{p,q}(x))^{\intercal}A^\intercal \delta_{p,q}(x')k(A^\intercal x,A^\intercal x')\right],
    \label{eq:KSD_quadratic}
\end{talign}
where $\delta_{p,q}(x) = s_{p}(x)-s_{q}(x)$.
\end{lemma}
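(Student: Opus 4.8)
The plan is to mirror the argument of \citet[Theorem 3.6]{Liu2016}, carrying each step over to the matrix-valued kernel $k_A$. I start from the variational form \eqref{eq:GKSD_vec}, $\text{KSD}_A(Q,P)=\sup_{\phi\in\mathcal{B}_{k_A}}\mathbb{E}_Q[\mathcal{A}_p\phi(x)]$, and aim to rewrite this supremum of a linear functional as a closed-form quadratic expression in $\delta_{p,q}=s_p-s_q$; taking $\sup_{[A]\in\text{Gr}(d,m)}$ of the result then produces \eqref{eq:KSD_quadratic} by the definition \eqref{eq:GKSD} of GKSD.

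The first step is to trade the score $s_p$ for the score difference $\delta_{p,q}$. Each $\phi\in\mathcal{B}_{k_A}$ has uniformly bounded entries, because $|\phi(x)^\intercal c|=|\langle\phi,k_A(\cdot,x)c\rangle_{\mathcal{H}_{k_A}}|\le\|\phi\|_{\mathcal{H}_{k_A}}\sqrt{c^\intercal k_A(x,x)\,c}$ and $k_A$ is bounded by hypothesis; hence $\phi$ lies in the Stein class of $q$, and Lemma \ref{lem:Steins_identity} (with $m=1$) yields $\mathbb{E}_Q[\mathcal{A}_p\phi(x)]=\mathbb{E}_Q[\delta_{p,q}(x)^\intercal\phi(x)]$. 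Crucially this absorbs the divergence term, so no derivative reproducing-kernel identities are needed downstream.

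Next I identify the Riesz representer of $\phi\mapsto\mathbb{E}_Q[\delta_{p,q}(x)^\intercal\phi(x)]$. The reproducing property of $k_A$ gives $\delta_{p,q}(x)^\intercal\phi(x)=\langle\phi,k_A(\cdot,x)\delta_{p,q}(x)\rangle_{\mathcal{H}_{k_A}}$, and interchanging the expectation with the inner product leads to $\mathbb{E}_Q[\mathcal{A}_p\phi]=\langle\phi,g_{p,q}\rangle_{\mathcal{H}_{k_A}}$ for $g_{p,q}(\cdot)\coloneqq\mathbb{E}_Q[k_A(\cdot,x)\delta_{p,q}(x)]$. Cauchy--Schwarz shows the supremum over $\mathcal{B}_{k_A}$ is $\|g_{p,q}\|_{\mathcal{H}_{k_A}}$ (attained at $\phi=g_{p,q}/\|g_{p,q}\|$), so that $\text{KSD}_A$, in its double-integral form \eqref{eq:KSD_double_integral}, equals $\|g_{p,q}\|_{\mathcal{H}_{k_A}}^2$. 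Expanding this norm by the kernel trick and substituting $k_A(x,x')=AA^\intercal k(A^\intercal x,A^\intercal x')$ gives $\mathbb{E}_{x,x'}[\delta_{p,q}(x)^\intercal AA^\intercal\delta_{p,q}(x')\,k(A^\intercal x,A^\intercal x')]=\mathbb{E}_{x,x'}[(A^\intercal\delta_{p,q}(x))^\intercal A^\intercal\delta_{p,q}(x')\,k(A^\intercal x,A^\intercal x')]$, which is exactly the integrand in \eqref{eq:KSD_quadratic}.

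The part I expect to require the most care is the justification of the two interchanges. Applying Lemma \ref{lem:Steins_identity} requires $\phi$ to belong to the Stein class of $q$, which is where the boundedness of $k_A$ and the decay $\lim_{\|x\|_2\to\infty}q(x)=0$ assumed in Theorem \ref{thm:KSD_separates} enter. Moving the expectation inside the $\mathcal{H}_{k_A}$ inner product requires Bochner integrability of $x\mapsto k_A(\cdot,x)\delta_{p,q}(x)$, which again follows from boundedness of the kernel together with the integrability of $\delta_{p,q}$ under $Q$. These are precisely the technical points that the proof of \citet[Theorem 3.6]{Liu2016} handles, and the only genuinely new bookkeeping here is tracking the matrix structure of $k_A$ through each identity.
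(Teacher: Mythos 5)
Your proof is correct and arrives at the right identity, but it takes a genuinely different route from the paper's. You work \emph{forwards} from the variational definition \eqref{eq:GKSD_vec}: a single application of Lemma \ref{lem:Steins_identity} (with $m=1$) replaces $s_p$ by $\delta_{p,q}$ inside the supremum, and then RKHS duality --- the Riesz representer $g_{p,q}(\cdot)=\bbE_Q[k_A(\cdot,x)\delta_{p,q}(x)]$ plus Cauchy--Schwarz --- evaluates the supremum in closed form, after which expanding the norm gives the quadratic integrand of \eqref{eq:KSD_quadratic}. The paper instead works \emph{backwards} from the quadratic form: it applies Lemma \ref{lem:Steins_identity} twice, once in each argument (first with $F(x)=A\,k(A^\intercal x,A^\intercal x')$, then with $F(x')=A\,v(x,x')^\intercal$), to show that the quadratic form equals the double-Stein-operator expression, i.e.\ the kernel-trick form \eqref{eq:KSD_double_integral} already asserted in the main text by analogy with \citet{Liu2016}. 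Your version is the more self-contained of the two, since it derives rather than presupposes the equivalence of \eqref{eq:GKSD_vec} and \eqref{eq:KSD_double_integral}; the paper's version avoids any explicit appeal to the vector-valued RKHS inner product and Bochner integrability, at the cost of a second invocation of the Stein identity. One caveat worth fixing: Cauchy--Schwarz gives $\sup_{\phi\in\mathcal{B}_{k_A}}\langle\phi,g_{p,q}\rangle_{\mathcal{H}_{k_A}}=\|g_{p,q}\|_{\mathcal{H}_{k_A}}$, whereas the right-hand side of \eqref{eq:KSD_quadratic} is $\|g_{p,q}\|_{\mathcal{H}_{k_A}}^2$, so your ``so that'' step silently squares the supremum. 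This is the same norm-versus-squared-norm convention that the paper itself conflates between \eqref{eq:GKSD_vec} and \eqref{eq:KSD_double_integral} (inherited from \citet{Liu2016}), and it is harmless for Theorem \ref{thm:KSD_separates} since only the vanishing of the discrepancy is used --- but you should state explicitly which convention for $\text{KSD}_A$ you are adopting rather than letting the square appear implicitly.
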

\begin{proof}
Define $v(x,x') = \calA_{p}(A k(A^\intercal \cdot,A^\intercal x'))(x)\in\bbR^{1\times m}$. Since $k$ is assumed to be bounded, we can apply Lemma \ref{lem:Steins_identity} with $F(x) = A k(A^\intercal x,A^\intercal x')$ to yield 
\begin{talign*}
    \bbE_{x, x' \sim Q}\left[(A^\intercal\delta_{p,q}(x))^{\intercal}A^\intercal \delta_{p,q}(x')k(A^\intercal x,A^\intercal x')\right] & = \bbE_{x, x' \sim Q}\left[v(x,x')A^\intercal \delta_{p,q}(x')\right]\\
    & = \bbE_{x, x' \sim Q}\left[\delta_{p,q}(x')^{\intercal}A v(x,x')^{\intercal}\right].
\end{talign*}

The assumption on the second order partial derivatives of the kernel and the proof of \citet[Theorem 3.6]{Liu2016} assure us that we can apply Lemma \ref{lem:Steins_identity} again with $F(x') = A v(x,x')^{\intercal}$ to get
\begin{talign*}
     \bbE_{x, x' \sim Q}\left[(A^\intercal \delta_{p,q}(x))^{\intercal}A^\intercal \delta_{p,q}(x')k(A^\intercal x,A^\intercal x')\right] = \bbE_{x, x' \sim Q}\left[\calA_{p}(A v(x,\cdot)^{\intercal})(x')\right].
\end{talign*}
Straightforward calculations along with $A^\intercal A=I_{m}$ show that
\begin{talign*}
    \calA_{p}(A v(x,\cdot)^{\intercal})(x') 
    & = (A^\intercal s_{p}(x))^{\intercal}(A^\intercal s_{p}(x'))k(A^\intercal x,A^\intercal x') + (A^\intercal s_{p}(x))^{\intercal} \nabla_{x_2}k(A^\intercal x,A^\intercal x') \\
    & + \nabla_{x_1}k(A^\intercal x,A^\intercal x')^\intercal A^\intercal s_p(x') + \text{Tr}(\nabla_{x_1,x_2}k(A^\intercal x,A^\intercal x')),
\end{talign*}
where $\nabla_{x_i}$ denotes the gradient of $k$ with respect to the $i^{th}$ argument, and $\nabla_{x_1, x_2} k$ is the matrix with $ij$-th entry $\partial^2 k / \partial x_i \partial x_j$. This completes the proof since it is the integrand of \eqref{eq:GKSD_vec}.
\end{proof}

\begin{proof}[Proof of Theorem \ref{thm:KSD_separates}]
    First, if $p=q$ then Lemma \ref{lem:KSD_quadratic} immediately tells us that the discrepancy is zero since $\delta_{p,q} = 0$. Now suppose the discrepancy is zero. Take any projector $A$, then $\|A^\intercal x-A^\intercal x'\|_2\leq \|x-x'\|_2$ since $A A^\intercal$ is a projection matrix. As $k$ is a radial kernel we know that $\Psi$ is monotonically decreasing \citep[Theorem 1.1]{Scovel2010}. Therefore $k(Ax,Ax')\geq k_{d}(x,x')$, where $k_{d} = \Psi(\|x-x'\|_2)$. By Lemma \ref{lem:KSD_quadratic},
    \begin{talign}
        0 = \text{GKSD}(Q,P) \geq \sup_{[A]\in\text{Gr}(d,m)}\bbE_{x, x' \sim Q}\left[(A^\intercal \delta_{p,q}(x))^{\intercal}A^\intercal \delta_{p,q}(x')k_{d}(x,x')\right].
        \label{eq:GKSD_upper_bound}
    \end{talign}
    Now choose $A$ as the matrix formed by stacking the first $m$ canonical basis elements. Since $k$ is characteristic and radial, we know that $\Psi(\|x-x'\|_2) = L_{\nu}(\|x-x'\|_2)$ where $L_{\nu}$ is the Laplace transform of a measure $\nu$ on $[0,\infty)$ with full support \citep[Proposition 5]{sriperumbudur2011universality}. Therefore $k_{d}(x,x') = \Psi(\|x-x'\|_2) = L_{\nu}(\|x-x'\|_2)$ and $k_{d}$ is also characteristic and hence integrally strictly positive definite.

    Using this we can conclude from \eqref{eq:GKSD_upper_bound} that $A^\intercal \delta_{p,q} = 0$, which implies $A^\intercal s_{q} = A^\intercal s_{q}$ so the first $m$ entries of the two score functions are the same. Repeat this argument by setting $A$ to be the next $m$ canonical basis elements until we have checked all canonical basis elements. We then deduce that the score functions are equal in all coordinates, which implies $p=q$ as required. 
\end{proof}

\section{PROOF of PROPOSITION \ref{prop: optimal projection}}
\label{appendix: lemma and proof of proposition 3}
We prove that, provided the densities of the distributions $Q, P$ differ only in a subspace identified by a projector  $A_0$ of rank $m$, the Riemannian gradient $\riemannGrad \alpha([A])$ is zero at $A = A_0$. That is, the optimal subspace $[A_0]$ is indeed a solution sought by the objective 
\begin{talign*}
    \sup_{[A]\in\text{Gr}(d,m)} \textmd{KSD}_A(Q, P) ,
\end{talign*}
where $\textmd{KSD}_A(Q, P)$ is given by (\ref{eq:KSD_double_integral}). 

In the rest of this section, we first state an assumption on the form of the kernel in \ref{appendix: extra assumption on the kernel} that is mild but can greatly simplify the proof. In \ref{appendix: decomposition of q and p}, we then provide intuition on the assumption of the decomposibility of the densities of $Q$ and $P$ and give a concrete example. The proof is presented in \ref{appendix: proof of proposition 3}.

\subsection{Assumption on the Kernel}
\label{appendix: extra assumption on the kernel}
In Theorem~\ref{thm:KSD_separates}, we have assumed the kernel takes the form $k(u, v) = \Psi(\| u - v \|_2)$, i.e.\ it is a \emph{radial kernel} defined in \citet{sriperumbudur2011universality}. To simplify the form of $\riemannGrad \alpha([A])$ in our proof, we introduce the following reformulation of $\Psi$ and impose a smoothness condition.

\begin{assumption}[Smoothness of kernel]
    There exists a continuously differentiable function $\Phi: \bbR \to \bbR$ for which $\Psi(s) = \Phi(s^2)$ for all $s \geq 0$. That is, the kernel has the form $k(u, v) = \Phi(\| u - v\|_2^2)$.
    \label{assumption: smoothness of kernel}
\end{assumption}
\begin{remark}
    The continuous differentiability condition on $\Phi$ is for convenience only and is mild. In particular, both Gaussian RBF and Inverse Multiquadric kernels satisfy Assumption~\ref{assumption: smoothness of kernel}.
\end{remark}

\subsection{Assumption on the Densities}
\label{appendix: decomposition of q and p}
Intuitively, the decomposition (\ref{eq: prop optimal projection, decomposition 1}) and (\ref{eq: prop optimal projection, decomposition 2}) states that the the candidate density $q$ and the target $p$ are distinct only in the subspace $[A_0]$. It can be viewed as a type of sparsity assumption. Both the multimodal mixture and the X-shaped mixture examples in the experiments satisfy this assumption, where it is easy to check that such functions $q^m, p^m, \xi$ can be found with $[A_0] = \{ a e_1 + b e_2: a, b \in \bbR \}$ being the subspace spanned by the first two canonical basis vectors $e_1, e_2$. See Appendix \ref{appendix:multimodal} and \ref{appendix:xshaped} for the precise definition of the target densities in these two examples.

More generally, this assumption holds when $q(x) $ is a prior, and $p(x) \propto q(x) f(y | P_0 x)$ is the posterior induced by observational data $y$ and a likelihood function $f(y | A_0A_0^\intercal x)$ that depends on $x$ only through its projection onto the subspace $[A_0]$. When the columns of $A_0$ are canonical basis vectors, this means the data-generating process only depends on a subset of the parameters $x$. A similar setting is also considered by \citet{chen2020projected}, where the authors argue that such a likelihood function is a reasonable approximation when the variation of the likelihood is negligible outside of some eigenspace of a gradient information matrix.

\subsection{Proof}
\label{appendix: proof of proposition 3}
We begin by deriving an expression for the Euclidean and Riemannian gradients of $\alpha([A])$ with respect to a projector $A$.

\begin{lemma}
    Suppose that Assumption \ref{assumption: smoothness of kernel} holds, and that $p, q$ satisfy the same assumptions in Theorem~\ref{thm:KSD_separates}. Then, given a projector $A$, the Euclidean gradient of $\alpha([A])$ at $A$ is
    \begin{alignat}{2}
        \nabla \alpha([A])
        = 
        & 2\bbE_{x, x' \sim Q}[ && k(A^\intercal x, A^\intercal x') 
        \delta_{p, q}(x) \delta_{p, q}(x')^\intercal A \nonumber \\
        & &&+ \Phi'\left(\| A^\intercal x - A^\intercal x'\|_2^2 \right) \delta_{p, q}(x)^\intercal A A^\intercal \delta_{p, q}(x') (x - x')(x - x')^\intercal A
         ],
       \label{eq: euclidean grad of alpha}
    \end{alignat}
    where the expectation is taken over i.i.d.\ copies $x, x' \sim Q$, $\Phi'$ is the first derivative of $\Phi$, and $[\nabla \alpha([A])]_{ij} = \frac{\partial}{\partial A_{ij}} \alpha([A])$. Moreover, letting $\Pi_A \coloneqq I_d - A A^\intercal$, the Riemannian gradient of $\alpha([A])$ at $A$ is 
    \begin{alignat}{2}
        \riemannGrad \alpha([A])
        &= \Pi_A\nabla \alpha([A])  \nonumber \\
        &=
        2\Pi_A \bbE_{x, x' \sim Q}[ && k(A^\intercal x, A^\intercal x') 
        \delta_{p, q}(x') \delta_{p, q}(x)^\intercal A 
        \nonumber \\ 
        & && +\Phi'\left(\| A^\intercal x - A^\intercal x'\|_2^2 \right) \delta_{p,q}(x')^\intercal A A^\intercal \delta_{p,q}(x) (x - x')(x - x')^\intercal A  ] .
        \label{eq: riemannian grad of alpha}
    \end{alignat}
    \label{lemma: gradients of alpha}
\end{lemma}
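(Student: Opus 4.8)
The plan is to start from the quadratic-form representation of $\text{KSD}_A$ established in Lemma~\ref{lem:KSD_quadratic}. Dropping the supremum, for a fixed projector $A$ we have
\begin{talign*}
    \alpha([A]) = \text{KSD}_A(Q, P) = \bbE_{x, x' \sim Q}\left[\delta_{p,q}(x)^\intercal A A^\intercal \delta_{p,q}(x')\, k(A^\intercal x, A^\intercal x')\right],
\end{talign*}
where $\delta_{p,q} = s_p - s_q$. Under Assumption~\ref{assumption: smoothness of kernel} the kernel factor becomes $\Phi(\|A^\intercal(x - x')\|_2^2) = \Phi((x-x')^\intercal A A^\intercal (x-x'))$, so the integrand is an explicit smooth function of the entries of $A$, which makes entrywise differentiation tractable. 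Note that this quadratic form is symmetric in $P$ and $Q$, so the computation is insensitive to the ordering of the arguments.

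Next I would differentiate the integrand with respect to each entry $A_{ij}$ via the product rule. The computation reduces to two matrix-calculus identities: for fixed vectors $a, b \in \bbR^d$, $\nabla_A(a^\intercal A A^\intercal b) = (ab^\intercal + ba^\intercal)A$, and for $u = x - x'$, $\nabla_A \Phi(u^\intercal A A^\intercal u) = 2\Phi'(u^\intercal A A^\intercal u)\, uu^\intercal A$. Applying these with $a = \delta_{p,q}(x)$ and $b = \delta_{p,q}(x')$, and assembling the $(i,j)$ entries into a matrix, I would obtain the pointwise gradient as a sum of an outer-product term weighted by the kernel $k(A^\intercal x, A^\intercal x')$ and a displacement term weighted by $\Phi'$.

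Then I would interchange the gradient with the expectation (justified by boundedness of the kernel and its derivatives together with the integrability assumptions inherited from Theorem~\ref{thm:KSD_separates}), and exploit that $x, x'$ are i.i.d. Symmetry under swapping $x \leftrightarrow x'$, together with the symmetry of the scalar kernel, identifies $\bbE[k\, \delta_{p,q}(x)\delta_{p,q}(x')^\intercal A]$ with $\bbE[k\, \delta_{p,q}(x')\delta_{p,q}(x)^\intercal A]$ and makes $(x-x')(x-x')^\intercal$ symmetric, so the two contributions arising from $\nabla_A(a^\intercal A A^\intercal b)$ collapse into a single term carrying the overall factor of $2$ in \eqref{eq: euclidean grad of alpha}. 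The Riemannian gradient then follows immediately by applying the tangent-space projection $\Pi_A = I_d - AA^\intercal$, as recorded in Section~\ref{sec:background}; the transposition of the roles of $\delta_{p,q}(x)$ and $\delta_{p,q}(x')$ that appears in \eqref{eq: riemannian grad of alpha} relative to \eqref{eq: euclidean grad of alpha} is harmless, again by the i.i.d.\ symmetry.

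The main obstacle is bookkeeping rather than conceptual: getting the derivative of the bilinear form $a^\intercal A A^\intercal b$ correct (it is easy to drop one of its two symmetric terms) and rigorously justifying the interchange of differentiation and expectation. For the latter I would invoke a differentiation-under-the-integral argument based on dominated convergence, using that $k$ and $\Phi'$ are bounded and that the score differences have the finite moments under $Q$ needed to dominate the integrand uniformly in a neighbourhood of $A$.
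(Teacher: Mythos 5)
Your proposal is correct and follows essentially the same route as the paper: both start from the quadratic-form representation of Lemma~\ref{lem:KSD_quadratic}, differentiate the integrand in $A$ (the paper via the first-order expansion $\alpha([A+tB])-\alpha([A]) = t\,\Tr(G^\intercal B)+\calO(t^2)$, you via the equivalent entrywise identities $\nabla_A(a^\intercal AA^\intercal b)=(ab^\intercal+ba^\intercal)A$ and $\nabla_A\Phi(u^\intercal AA^\intercal u)=2\Phi'(\cdot)uu^\intercal A$), use the i.i.d.\ symmetry of $x,x'$ together with the symmetry of $k$ to merge the two contributions from the bilinear form into the overall factor of $2$, and obtain the Riemannian gradient by projecting with $\Pi_A$. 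Your explicit dominated-convergence justification for exchanging $\nabla_A$ with $\bbE$ is, if anything, slightly more careful than the paper's treatment of the same step.
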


\begin{proof}[Proof of Lemma~\ref{lemma: gradients of alpha}]
    Letting $G$ be the right hand side of (\ref{eq: euclidean grad of alpha}). We will show that, for any $t \in \bbR$ and projectors $A, B$ of rank $m$, 
    \begin{align*}
        \alpha([A + tB]) - \alpha([A])
        = t \cdot \Tr(G^\intercal B ) + \calO(t^2).
    \end{align*}
    Let $w \coloneqq x - x'$ so that $k(A^\intercal x, A^\intercal x') = \Phi( \| A^\intercal w\|_2^2 )$ for a projector $A$. Starting with $\alpha([A])$ in the form of (\ref{eq:KSD_quadratic}), 
    \begin{align}
        \alpha([A + tB]) - \alpha([A])
        &= \bbE_{x, x' \sim Q} \left[ \delta_{p, q}(x)^\intercal (A + tB) (A + tB)^\intercal \delta_{p, q}(x') \Phi\left( \left\| (A + tB)^\intercal w \right\|_2^2 \right)  \right] \nonumber \\
        & \quad - \bbE_{x, x' \sim Q} \left[ \delta_{p, q}(x)^\intercal A A^\intercal \delta_{p, q}(x') \Phi\left( \left\| A^\intercal w \right\|_2^2 \right)  \right] \nonumber \\
        &= \bbE_{x, x' \sim Q} \left[ \delta_{p, q}(x)^\intercal A A^\intercal \delta_{p, q}(x') 
        \left( \Phi\left( \left\|(A + tB)^\intercal w \right\|_2^2 \right) - \Phi\left( \left\|A^\intercal w \right\|_2^2 \right) \right)  \right] \nonumber \\
        & \quad + 2t \bbE_{x, x' \sim Q} \left[ \delta_{p, q}(x)^\intercal A B^\intercal \delta_{p, q}(x') \Phi\left( \left\|(A + tB)^\intercal w \right\|_2^2 \right)  \right] \nonumber \\
        & \quad + \calO(t^2) .
        \label{eq: taylor exp of alpha intermediate}
    \end{align}
    We now seek a linear approximation of the function $A \mapsto \Phi( \| A^\intercal w \|_2^2 )$, where $A$ is a projector of rank $m$. Since $\Phi$ is continuously differentiable by assumption, it admits a first-order Taylor expansion at $\|A^\intercal w\|_2^2$, which gives
    \begin{align}
        &\Phi\left( \left\| (A + tB)^\intercal w \right\|_2^2 \right) - \Phi\left( \left\| A^\intercal w \right\|_2^2 \right) \\
        &= \Phi\left( \left\| A^\intercal w \right\|_2^2 + t \left( 2w^\intercal A B^\intercal w + t\left\| B^\intercal w \right\|_2^2 \right) \right) - \Phi\left( \left\| A^\intercal w \right\|_2^2 \right) 
        \nonumber \\
        &= \Phi\left( \left\| A^\intercal w \right\|_2^2 \right) 
        + t \Phi'\left( \left\| A^\intercal w \right\|_2^2 \right) \left( 2w^\intercal A B^\intercal w + t \left\| B^\intercal w \right\|_2^2 \right) - \Phi\left( \left\| A^\intercal w \right\|_2^2 \right) + \calO(t^2)
        \nonumber \\
        &= t \Phi'\left( \left\| A^\intercal w \right\|_2^2 \right) \left( 2w^\intercal A B^\intercal w \right) + \calO(t^2).
        \label{eq: taylor expansion of kernel}
    \end{align}
    
    The rest of the proof for the form of the Euclidean gradient follows by substituting (\ref{eq: taylor expansion of kernel}) into (\ref{eq: taylor exp of alpha intermediate}) and using properties of the trace operator:
    \begin{align*}
        \alpha([A + tB]) - \alpha([A])
        &= 2t \bbE_{x, x' \sim Q} \left[ \Phi'\left(\left\| A^\intercal w \right\|_2^2 \right) \delta_{p, q}(x)^\intercal A A^\intercal \delta_{p, q}(x') w^\intercal A B^\intercal w \right] \\
        & \quad + 2t \bbE_{x, x' \sim Q}\left[ \delta_{p, q}(x)^\intercal A B^\intercal \delta_{p, q}(x')  \Phi\left( \left\| A^\intercal w \right\|_2^2 \right) \right] + \calO(t^2) \\
        &= 2t \bbE_{x, x' \sim Q} \left[ \Phi'\left(\left\| A^\intercal w \right\|_2^2 \right) \delta_{p, q}(x)^\intercal A A^\intercal \delta_{p, q}(x') \Tr \left( w w^\intercal A B^\intercal \right) \right] \\ 
        & \quad + 2t \bbE_{x, x' \sim Q} \left[ \Phi\left( \left\| A^\intercal w \right\|_2^2 \right) \Tr \left(\delta_{p, q}(x') \delta_{p, q}(x)^\intercal A B^\intercal \right) \right]  + \calO(t^2)\\
        &= t \cdot \Tr \left( 2 \bbE_{x, x' \sim Q} \left[ \Phi'\left(\left\| A^\intercal w \right\|_2^2 \right) \delta_{p, q}(x)^\intercal A A^\intercal \delta_{p, q}(x')  w w^\intercal A B^\intercal \right]\right) \\ 
        & \quad + t \cdot \Tr \left( 2 \bbE_{x, x' \sim Q} \left[ \Phi\left( \left\| A^\intercal w \right\|_2^2 \right) \delta_{p, q}(x') \delta_{p, q}(x)^\intercal A B^\intercal \right]\right)  + \calO(t^2)\\
        &= t \cdot \Tr ( G^\intercal B ) + \calO(t^2),
    \end{align*}
    where in the last equality we have substituted the definition of $G$ and used the fact $\Tr(G B^\intercal) = \Tr(G^\intercal B)$. Finally, it can be shown that $\text{Gr}(d, m)$ is a Riemannian submanifold of the Euclidean manifold, so the Riemannian gradient is the orthogonal projection of the Euclidean gradient to the tangent spaces (see e.g.\ \citet[Proposition 3.53]{boumal2020introduction}). That is, $\riemannGrad \alpha([A]) =  (I_d - A A^\intercal)\nabla \alpha([A]))$, so (\ref{eq: riemannian grad of alpha}) follows.
\end{proof}

\begin{proof}[Proof of Proposition \ref{prop: optimal projection}]
    Decomposing (\ref{eq: riemannian grad of alpha}) into two terms, we have
    \begin{align}
        \riemannGrad \alpha([A])
        &=
        2\Pi_A \bbE_{x, x' \sim Q}[ k(A^\intercal x, A^\intercal x') 
        \delta_{p, q}(x') \delta_{p, q}(x)^\intercal A] \nonumber \\ 
        & \quad + 2\Pi_A \bbE_{x, x' \sim Q}[ \Phi'\left(\| A^\intercal x - A^\intercal x'\|_2^2 \right) \delta_{p,q}(x')^\intercal A A^\intercal \delta_{p,q}(x) (x - x')(x - x')^\intercal A ],
        \label{eq: riemannian grad of alpha decomposed}
    \end{align}
    where $\Pi_A = I_d - A A^\intercal$. We will show that both terms equal to zero at $A = A_0$. 
    Firstly, define $\tilde{\xi}_{\Pi_{A_0}} (x) = \xi( \Pi_{A_0} x)$ and $P_0 = A_0 A_0^\intercal$. The score function of the candidate density is
    \begin{align*}
        \nabla_x \log q(x)
        &= \nabla_x \log (q^m \circ P_0)(x) + \nabla_x \log \tilde{\xi}_{\Pi_{A_0}} (x) \\
        &= P_0 \nabla_{x} \log q^m (P_0 x) + \nabla_x \log \tilde{\xi}_{\Pi_{A_0}} (x), 
    \end{align*}
    where $\nabla_x \log q^m(P_0 x)$ denotes the gradient of $\log q^m$ evaluated at $P_0 x$, and where the second line follows from the chain rule and the symmetry of $P_0$. Similarly, the score function of the target density takes the form
    \begin{align*}
        \nabla_x \log p(x)
        = P_0 \nabla_{x} \log q^m(P_0 x)
        + \nabla_x \log \xi(x).
    \end{align*}
    Taking the difference,
    \begin{align*}
        \delta_{p, q}(x)
        = P_0 ( \nabla_{x} \log p^m(P_0x) - \nabla_{x} \log q^m(P_0 x)).
    \end{align*}
    Since $\Pi_{A_0} P_0 = 0$, we conclude that $\Pi_{A_0} \delta_{p, q}(x) = 0$, and the first term of (\ref{eq: riemannian grad of alpha decomposed}) is zero.
    
    For the second term, we define $a \coloneqq \delta_{p, q}^\intercal(x') A_0 A_0^\intercal \delta_{p, q}(x)$ and $b \coloneqq \Phi'(\| A^\intercal x - A^\intercal x'\|_2^2)$. Since both the terms $ \delta_{p, q}(x)$ and $A_0^\intercal x - A_0^\intercal x' = A_0^\intercal (P_0 x - P_0 x')$ are deterministic given $(P_0 x, P_0 x')$, so are $a$ and $b$. We further introduce the notations $w \coloneqq P_0 x, w' \coloneqq P_0 x'$ and $v \coloneqq \Pi_{A_0} x, v' \coloneqq \Pi_{A_0} x'$. Since $q^m, p^m$ and $\xi$ are unnormalised densities over $\bbR^d$, we can invoke the tower rule to simplify the second term as
    \begin{align*}
        \Pi_{A_0} \bbE_{x, x' \sim Q}\left[
        a b (x - x') (x - x')^\intercal A_0
        \right]
        &= \bbE_{x, x' \sim Q}\left[
        a b ( \Pi_{A_0} x - \Pi_{A_0} x') (P_0 x - P_0 x')^\intercal A_0
        \right] \\
        &= \bbE_{w, w' \sim q^m} \left[ \bbE_{v, v' \sim q^\perp} \left[
        a b (v - v') (w - w')^\intercal A_0
        \ | \ w, w' \right] \right] \\
        &= \bbE_{w, w' \sim q^m} \left[ a b \bbE_{v, v' \sim q^\perp} \left[
        v - v' \ | \ w, w' \right] 
        (w - w')^\intercal A_0 \right] ,
    \end{align*}
    where the first line follows from the fact that $A_0 = A_0 A_0^\intercal A_0 = P_0 A_0$. Since $x, x'$ are i.i.d.\ copies, we have $\bbE_{v, v' \sim q^\perp}[ v - v' | w, w' ] = \bbE_{q^\perp}[ v \ |\ w] - \bbE_{q^\perp}[ v' \ |\  w' ] = 0$, so the second term of (\ref{eq: riemannian grad of alpha decomposed}) is also zero. This shows that $\riemannGrad \alpha([A]) = 0$ at $A = A_0$, as required.
\end{proof}

\section{EXPERIMENTAL DETAILS}

\label{appendix: experimental details}
In this section, we provide implementation details for the experiments in Section~\ref{sec: experiments}.

\subsection{Setups}
\paragraph{Learning Rates}
The best step sizes of the gradient method for the particle update in all methods, and for the projector or slice update in GSVGD and S-SVGD, are selected over a grid of values from $10^{-4}$ to 1 such that they give the minimal energy distance between the particle estimation and the ground truth. This is done for the 50-dimensional multimodal example only, and the same set of learning rates is then used for all experiments. This is done for all methods to ensure a fair comparison. 

\paragraph{Temperature $T$}
For the temperature parameter $T$ in (\ref{eq: discretized projector update}), we adopt an annealing scheme where we start with $T = T_0$ and gradually increment it to a large value, $T_{\textmd{large}}$. Intuitively, setting $T$ small would lead to faster convergence, but hinders exploration of the maxima of $\alpha_t([A]) = \mathrm{KSD}_A(Q_t, P)$. On the other hand, a large $T$ means the system (\ref{eq:tangent_sde}) is effectively sampling projectors at random from the Grassmann manifold, thus allowing better exploration of the modes at the expense of a slower convergence. The annealing scheme allows trade-offs between such exploration and exploitation. Starting from $T_0$, at each iteration $t$, we multiply $T$ by 10 if the change $| \gamma_t - \gamma_{t-1} |$ in the \emph{particle-averaged magnitude} \citep{zhuo2018message} of the update function, defined as $\gamma_t \coloneqq \frac{1}{N} \sum_{i = 1}^N \| \sum_{l = 1}^M \widehat{\phi}_{A_{t, l}}(x_i^t) \|_\infty$, is less than a pre-specified threshold, where $N$ is the sample size, $M$ is the number of projectors, and the update function $\widehat{\phi}_{A_{t, l}}$ is defined in (\ref{eq: gsvgd update}). In the experiments, we used $T_0 = 10^{-4}$ and $T_{\textmd{large}} = 10^{6}$, and set the threshold to be $10^{-4} M$. This choice of the threshold is motivated by the dependence of the GSVGD particle update on the number of projectors.

\paragraph{Initialisation of Projectors}
The projectors in GSVGD are initialized with matrices formed by one-hot vectors. This is similar to the setup in S-SVGD, where the slices are initialized to be the canonical basis elements. When more than one projector is used in GSVGD, the projection dimensions, $m$, are kept fixed for all projectors for simplicity. The number of projectors, $M$, is chosen such that the projection dimensions sum up to $d$ and is capped at 20, where $d$ is the dimension of the problem. That is, $M = \min(20, \lfloor d/m \rfloor)$, where $\lfloor a \rfloor$ denotes the largest integer smaller than or equal to $a$

\paragraph{Other Setups}
For SVGD, we follow exactly the same setups as in \citet{Liu2016SVGD}. For S-SVGD, we follow mostly the same setups in Algorithm~2 and Appendix G of \citet{gong2021sliced}, except we updated the slices at every step instead of only when the particles have moved a sufficiently far distance from the previous step, as suggested in their paper. As noted in their paper, this trick is to prevent over-fitting of the slices to small samples, which is unlikely an issue in our experiments due to the large sample size we chose. Also, we remark that, for GSVGD, the coupled ODE-SDE system introduced in Section~\ref{sec:GSVGD} suggests a principled way to balance the evolution of the particles and projectors in GSVGD by varying the temperature $T$.

We ran each experiment with 20 random seeds, except Bayesian logistic regression which was repeated 10 times. The mean and $95\%$ confidence intervals are reported in all figures. A No-U-Turn Sampler (NUTS) \citep{hoffman2014no} is used as a gold standard in the conditioned diffusion and Bayesian logistic regression experiments. Each method ran for $2000$ iterations, which we found to be sufficient for convergence (see Figure~\ref{fig: appendix convergence vs epochs}).

\subsection{Multimodal Mixture}
\label{appendix:multimodal}
The mean vectors of the multimodal mixture are defined as $\mu_k = (\sqrt{5} \cos(2k\pi / K + \pi/4), \sqrt{5} \sin(2k\pi / K + \pi/4), 0, \ldots, 0 )^\intercal \in \bbR^d$, for $k = 1, \ldots 4$; see Figure~\ref{fig: multimodal particles 50d}. That is, the target in the first two dimensions is a mixture of 4 Gaussian distributions with modes equally spaced on a circle of radius $\sqrt{5}$, whilst in the other coordinates it is a standard Gaussian.

\subsection{X-Shaped Mixture}
\label{appendix:xshaped}
The covariance matrices in the X-shaped mixture example have the following correlated block diagonal structure:
\begin{talign*}
    \Sigma_1 = 
    \begin{pmatrix}
        \begin{matrix}
            1 & \delta \\
            \delta & 1
        \end{matrix}
        & \mathbf{0} \\
        \mathbf{0} & I_{d - 2}
    \end{pmatrix}
    ,
    \qquad
    \Sigma_2 = 
    \begin{pmatrix}
        \begin{matrix}
            1 & -\delta \\
            -\delta & 1
        \end{matrix}
        & \mathbf{0} \\
        \mathbf{0} & I_{d - 2}
    \end{pmatrix},
\end{talign*}
where $\delta = 0.95$ controls the correlation between the first two dimensions.

\subsection{Bayesian Logistic Regression}
We follow the same setting as \citet{Liu2016SVGD} by placing a Gaussian prior $p_0(w | \alpha) = \calN(w; 0, \alpha^{-1})$ on the regression weights $w$ with $p_0(\alpha) = \text{Gamma}(\alpha; 1, 0.01)$. We are interested in the posterior $p(x | D)$ of $x = [w, \alpha]$. The model was tested on a subset of the Covertype dataset. The original dataset consists of 581,012 data points and 54 features with binary labels, which is too large for NUTS. As a result, a subset of 1,000 randomly selected data points was used, although other methods that are more scalable to data size, such as the stochastic gradient Langevin dynamics (SGLD) of \citet{welling2011bayesian} or the HMC-ECS of \citet{dang2019hamiltonian}, could have been used to allow inference on the entire dataset.

To evaluate the results, we used both the energy distance and the covariance estimation error $\| \hat{\Sigma} - \Sigma\|_2$, where $\|A\|_2 = \sqrt{\sum_{i, j = 1}^d A^2_{ij}}$ is the Frobenius norm of a matrix $A \in \bbR^{d \times d}$, and $\hat{\Sigma}, \Sigma$ are respectively the sample covariance matrices of the particle estimation and of a HMC run treated as the ground truth. The covariance estimation error was used because we found that the energy distance is not sensitive enough to differences in the second moments. From Figure~\ref{fig:blr:covariance_estimation}, we observe an overestimated covariance between distinct variables compared with the HMC reference, as shown by the large values (in absolute term) of the off-diagonal entries. On the other hand, GSVGD projecting onto $m = 10$ dimensional subspaces showed greater alignment, despite having a similar energy distance as S-SVGD (Figure~\ref{fig:blr:energy distance}). Again, we emphasise that the performance of GSVGD deteriorates when the projection dimension is extreme, e.g.\ when $m$ equals 1 or the full dimension of the problem ($m = 55$).

\section{SUPPLEMENTARY FIGURES}
In this section, we include supplementary figures for the experiments in Section~\ref{sec: experiments} as well as some ablation studies.

\begin{figure}[t!]
    \centering
    \subfigure[Multivariate Gaussian]
    {
        \label{fig: gaussian energy vs epochs}
        \includegraphics[width=0.22\textwidth]{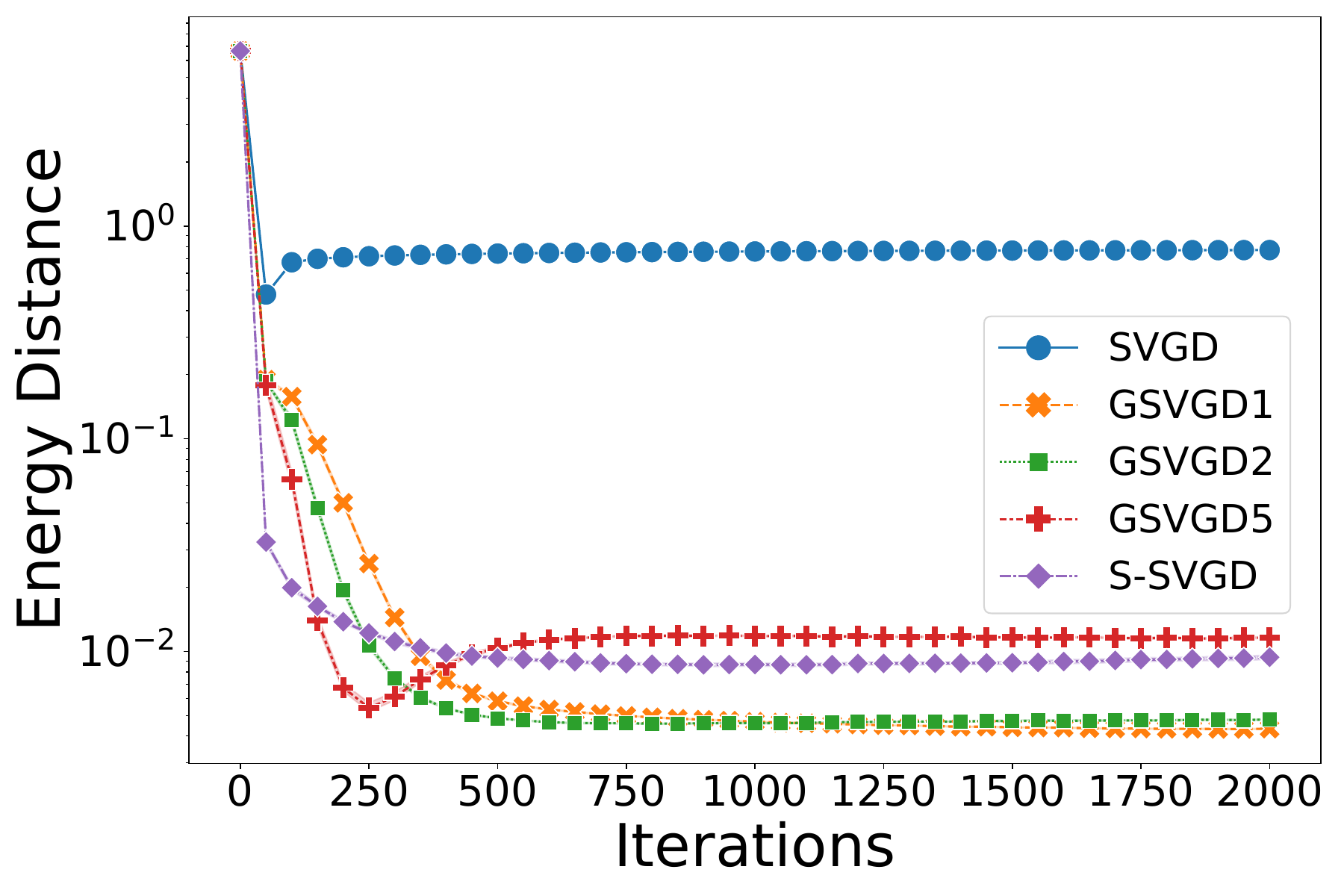}
    }
    \subfigure[Multimodal mixture]
    {
        \label{fig: multimodal energy vs epochs}
        \includegraphics[width=0.22\textwidth]{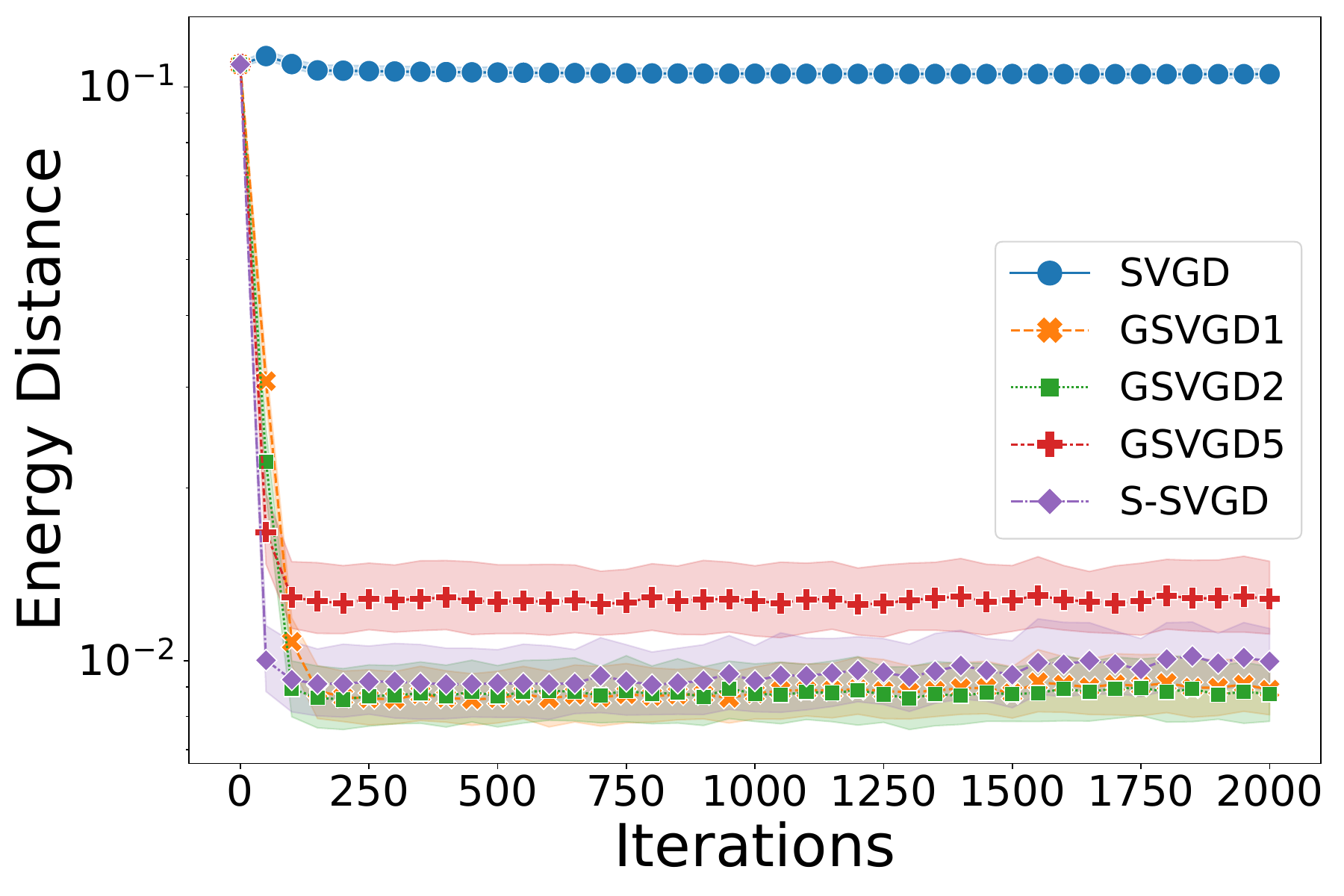}
    }
    \subfigure[X-shaped mixture]
    {
        \label{fig: xshaped energy vs epochs}
        \includegraphics[width=0.22\textwidth]{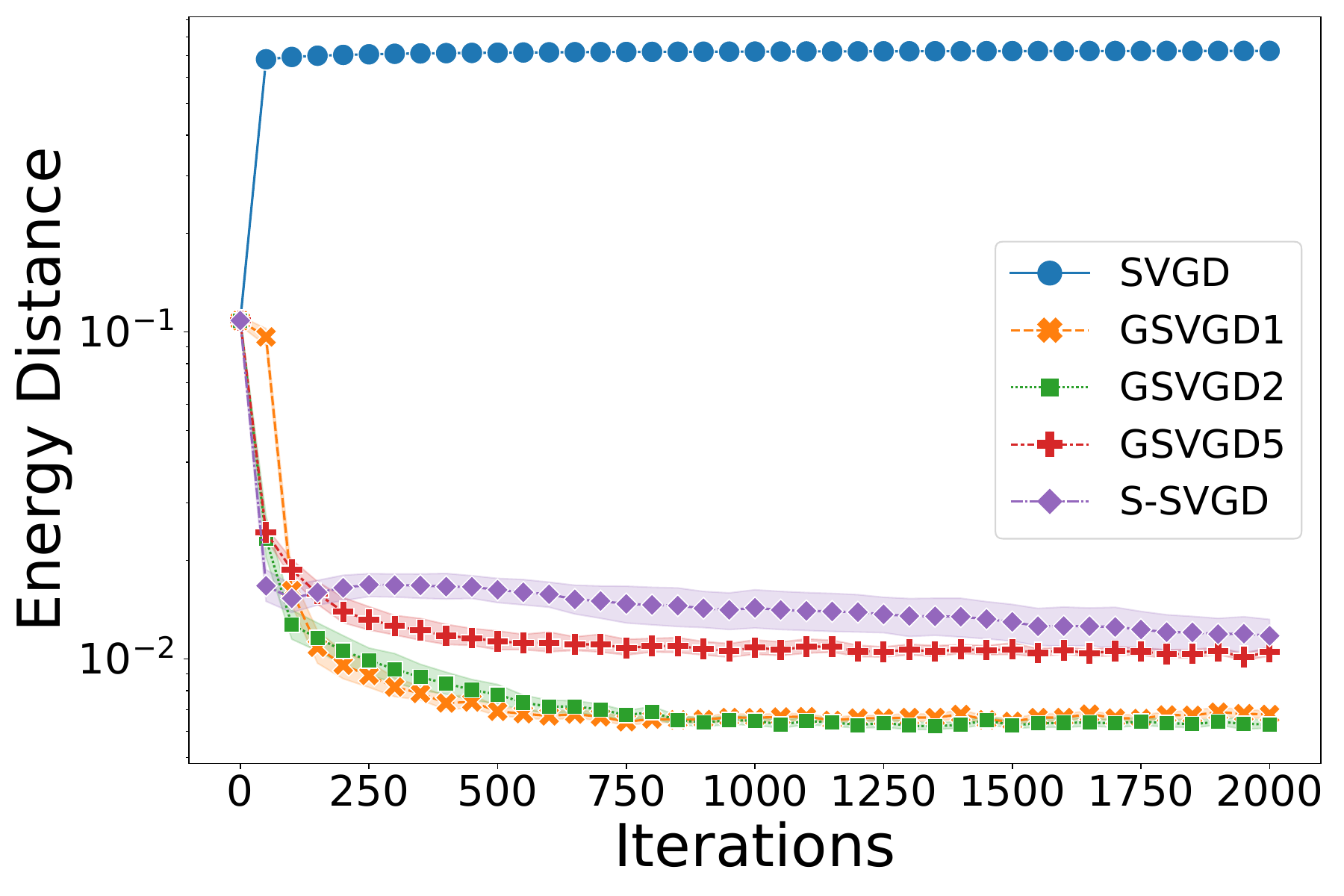}
    }
    \caption{Convergence of particles in the (a) multivariate Gaussian, (b) multimodal mixture and (c) X-shaped experiments. The dimensionality is 50 in all cases.}
    \label{fig: appendix convergence vs epochs}
\end{figure}

\begin{figure}[t!]
\centering
\begin{minipage}{.45\textwidth}
    \centering
    \includegraphics[width=0.8\textwidth]{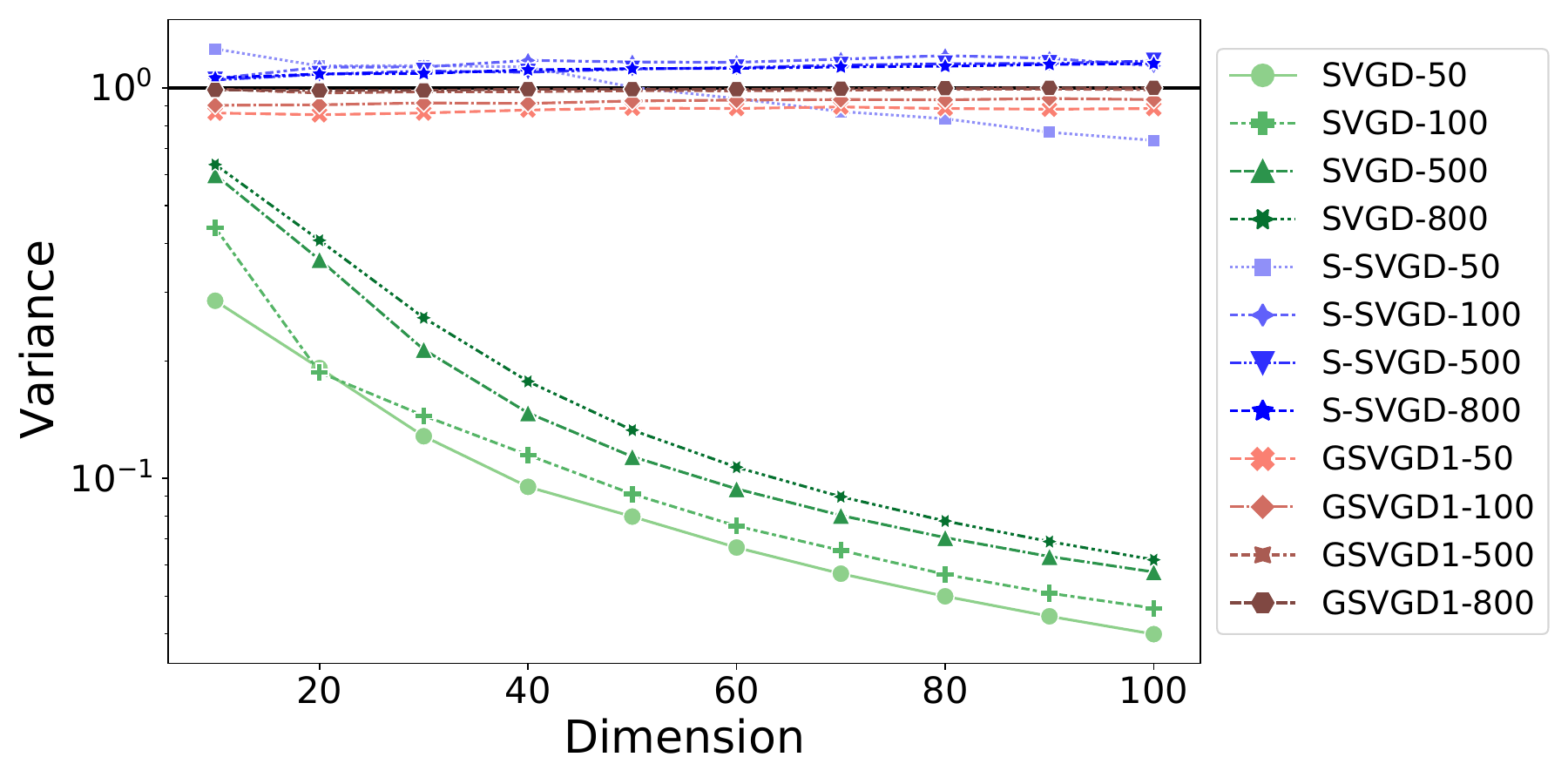}
    \caption{Variance estimates for different sample sizes, where the true value is shown by the black solid line. SVGD-50 means 50 particles are used to estimate the variance.}
    \label{fig: gaussian nparticles}
\end{minipage}
\hfill
\begin{minipage}{.45\textwidth}
    \centering
    \includegraphics[width=0.7\textwidth]{}
    \caption{Conditioned Diffusion Process: Energy distance between HMC and GSVGD solutions against different projection dimensions. SVGD and S-SVGD are also included for comparison.}
    \label{fig: diffusion energy}
\end{minipage}
\end{figure}

\subsection{Synthetic Experiments}
Figure~\ref{fig: appendix convergence vs epochs} supplements Figure~\ref{fig: summary metrics}, and shows the convergence of SGVD, S-SVGD and GSVGD to the target distribution in the three synthetic experiments in Section~\ref{sec: experiments}. We can see that the smallest energy distance between the particle estimation and the ground truth is achieved by GSVGD1 and GSVGD2, followed by S-SVGD and GSVGD5. GSVGD with any projection dimension has comparable per-iteration convergence rate as S-SVGD except for the multivariate Gaussian target. We believe this is because of the extra noise introduced via the coupled ODE-SDE system described in Section~\ref{sec:GSVGD}, which added unnecessary complexities for this simple, single-mode target. 

\subsection{Conditioned Diffusion Process}
Figure~\ref{fig: diffusion energy} plots the energy distance between the particle estimation of each method and a HMC sampler against different projection dimensions, $m$. We see that there exists a feasible region of projection dimensions where GSVGD achieves a greater degree of agreement with the reference HMC, compared with SVGD and S-SVGD. This is expected given the problem setup: the forcing term is a Brownian motion with covariance $C(t, t') = \min(t, t')$, and hence is correlated between time points. This correlation means the solution state $u$ will admit a low-dimensional structure. 

Although in this case an appropriate choice of projection dimension could be inferred from the problem setup, selecting the best projection dimension for a general problem is an open question. 
Nonetheless, the shape of the curves in Figure~\ref{fig: diffusion energy} suggests that a feasible region should not cover the extreme ends.

\subsection{Ablation Studies}

\paragraph{Sample Size}
To evaluate the influence of sample size on the performance of GSVGD, we show the dimension-averaged variance estimates for the multivariate Gaussian target. We follow the same experimental setup in Section~\ref{sec: experiments}: for each of $m = 1, 2, 5$, we run each method for 2000 iterations and compute the dimension-averaged marginal variance of the particle estimation of the last iteration. Figure~\ref{fig: gaussian nparticles} shows the estimates for GSVGD1, SVGD and S-SVGD when the target is 50-dimensional. GSVGD with other projection dimensions is not included in the figure for clarity. We see that GSVGD1 remains robust with the sample size and achieves accurate estimation. In contrast, SVGD severely underestimates the variance in high dimensions regardless of the sample size, and S-SVGD shows slight overestimation.

\begin{figure}[t!]
\centering
\begin{minipage}{.45\textwidth}
    \centering
    \includegraphics[width=0.9\textwidth]{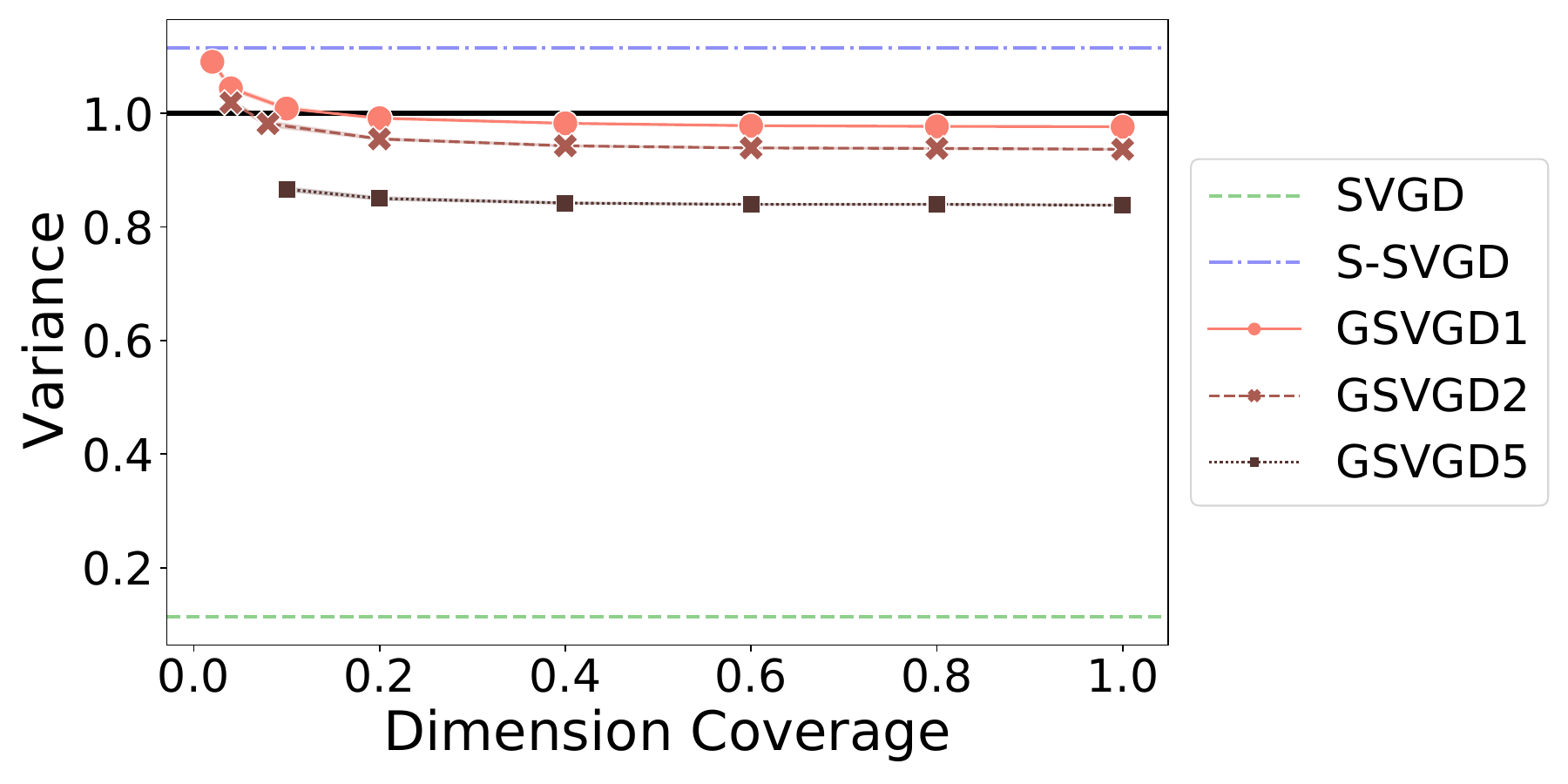}
    \caption{Ablation study on the number of projectors $M$. The target is a 50-dimensional multivariate Gaussian. Black solid line marks the true value.}
    \label{fig: gaussian ablation M}
\end{minipage}
\hfill
\begin{minipage}{.45\textwidth}
    \vspace{-2.2em}
    \centering
    \includegraphics[width=0.9\textwidth]{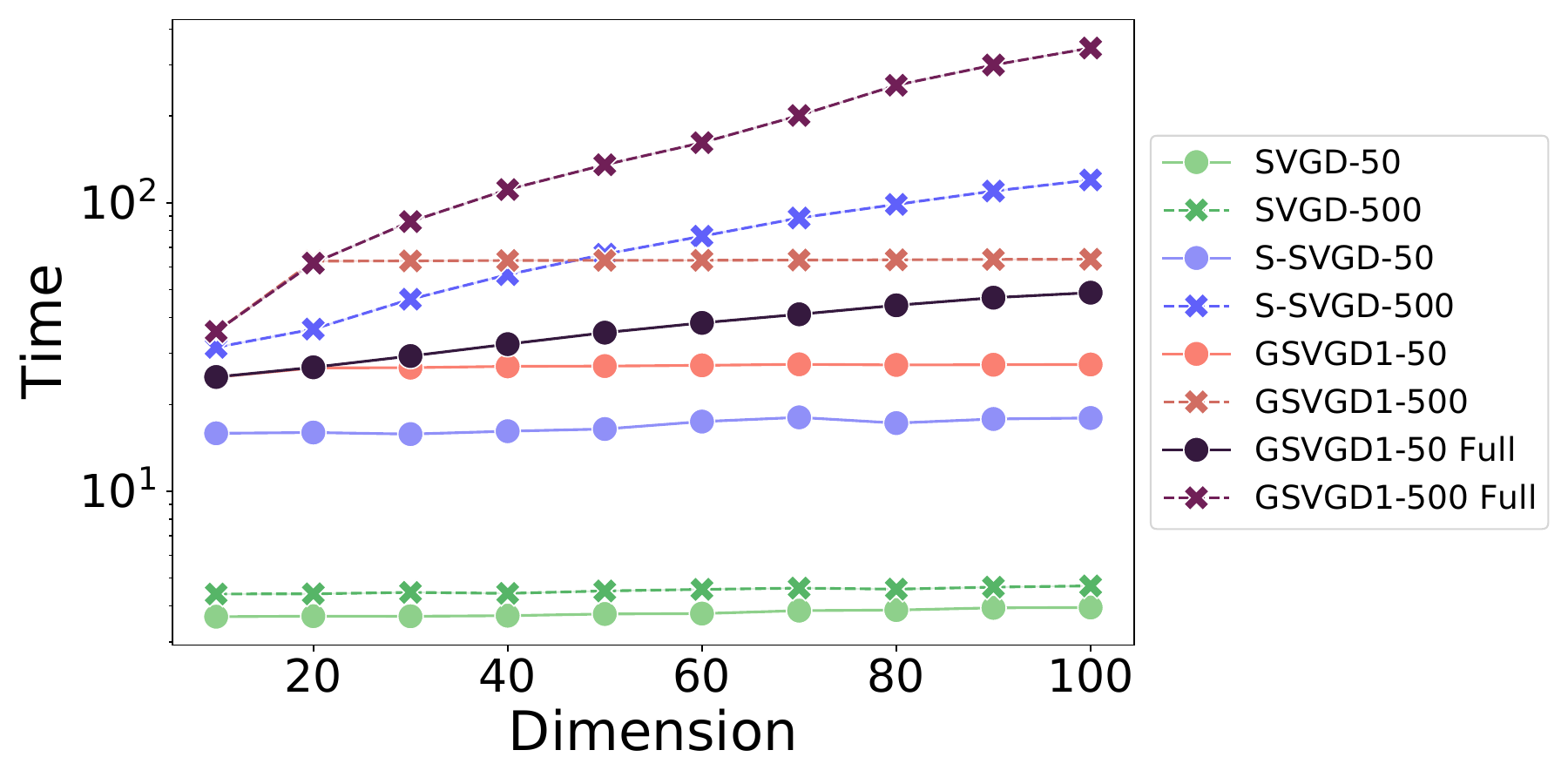}
    \caption{Run time (in seconds) against the dimension of the multivariate Gaussian target.}
    \label{fig: gaussian time}
\end{minipage}
\end{figure}

\paragraph{Number of Projectors}
In Figure~\ref{fig: gaussian ablation M}, we perform an ablation study of the performance of GSVGD with respect to the number of projectors $M$ used in Algorithm~\ref{alg:gsvgd}. The target is again the 50-dimensional multivariate Gaussian distribution. The estimate of the dimension-averaged marginal variance is plotted against the \emph{dimension covergae}, which is defined as $(M*m) / d$; that is, it is the total ranks of the $M$ projectors as a proportion of dimension. For a given $m$, we choose $M = m, 10, 20, \ldots, 50$. We also include the result for $M = 2$ and $5$ when $m=1$, and $M = 2$ when $m = 2$.

We observe that the variance estimate of GSVGD improves as $M$ increases until the dimension coverage reaches roughly 0.2, beyond which the performance stabilises. This suggests that using more projectors does not necessarily improves the estimation quality of GSVGD. It also justifies our choice of $M = \min(20, \lfloor d/m \rfloor)$ (corresponding to a dimension coverage of $0.4$ for $m=1$, $0.8$ for $m=2$ and $1.0$ for $m=5$). We remark that the estimated variance of GSVGD1 and GSVGD2 are consistently better than S-SVGD and SVGD for all choices of $M$.

\paragraph{Time Complexities}
We compare the run time of GSVGD against its competitors in Figure~\ref{fig: gaussian time}. In GSVGD, we used $M = \min(20, \lfloor d/m \rfloor)$ projectors of rank $m=1$ (GSVGD1-50, GSVGD1-500, where GSVGD1-50 means 50 particles were used). We have capped $M$ to be at most 20 instead of setting it to the dimensionality $d$ because this was the default setting in all experiments; however, since S-SVGD uses all $d$ slices, we also include the run time of GSVGD1 when using $M=d$ projectors (GSVGD1-50 Full, GSVGD1-500 Full) for a fair comparison. 

We see that the run time of GSVGD1-50 Full and GSVGD1-500 Full grows an order-of-magnitude faster than S-SVGD even though they use the same projection dimension and number of projectors/slices. This is due to the extra matrix computation in the particle update of GSVGD, which in turn arises from the fact that the projectors for the score functions are allowed to vary in GSVGD but not in S-SVGD. In particular, the time complexity for a fixed sample size is $\calO(d^3)$ for GSVGD1-50 Full and GSVGD1-500 Full, in contrast to $\calO(d^2)$ for S-SVGD. However, capping $M$ to be at most $20$ can greatly reduce the run time of GSVGD in high dimensions. In fact, with such a choice of $M$ the computational complexity reduces to $\calO(d^2)$, as can be seen from the curves GSVGD1-50 and GSVGD1-500.

Finally, the computational burden of the projector update in GSVGD1-50 Full and GSVGD1-500 Full ($\calO(d^2)$) is less than that of the particle update. Hence, the main computational bottleneck of GSVGD arises from the particle update (\ref{eq: gsvgd update}).

\thispagestyle{empty}

\end{document}